\documentclass{article}
\usepackage{microtype}
\usepackage{graphicx}
\usepackage{subfigure}
\usepackage{booktabs} 

\usepackage{wrapfig}
\usepackage{hyperref}
\usepackage{url}
\usepackage{amsmath}
\usepackage{algorithm}        
\usepackage{algorithmic}      
\usepackage{amsmath}
\usepackage{amssymb}
\usepackage{wrapfig}

\usepackage{amsthm}
\usepackage{graphicx}
\theoremstyle{plain}
\usepackage[table]{xcolor}  
\usepackage{booktabs}
\usepackage{ragged2e}      

\newtheorem{lemma}{Lemma}
\newtheorem{proposition}{Proposition}

\newtheorem{assumption}{Assumption}[section]
\usepackage{graphicx}
\usepackage{booktabs}
\usepackage{multirow}
\usepackage{enumitem}
\usepackage{xcolor}
\usepackage{comment}
\usepackage{tcolorbox}
\tcbuselibrary{theorems}

\usepackage{tabularx}
\usepackage{makecell} 
\usepackage{bm}
\usepackage{textcomp} 

\newtcbtheorem[number within=section]{graytheorem}{Theorem}{
  colback=gray!10,
  colframe=gray!50,
  fonttitle=\bfseries,
  boxrule=0.5pt,
  arc=2pt,
  left=6pt,
  right=6pt,
  top=6pt,
  bottom=6pt
}{thm}
\usepackage[preprint]{neurips_2026}

\usepackage[utf8]{inputenc} 
\usepackage[T1]{fontenc}    
\usepackage{hyperref}       
\usepackage{url}            
\usepackage{booktabs}       
\usepackage{amsfonts}       
\usepackage{nicefrac}       
\usepackage{microtype}      
\usepackage{xcolor}         
\usepackage{graphicx}
\usepackage{caption}
\title{SRG: \underline{S}core-based \underline{R}elaxation-guided \underline{G}eneration for Mixed Integer Linear Programming}

\author{%
  Ruobing Wang\textsuperscript{1} \quad
  Xin Li\textsuperscript{1,*} \quad
  Yujie Fang\textsuperscript{1} \quad
  Mingzhong Wang\textsuperscript{2} \\
  \textsuperscript{1}Beijing Institute of Technology, Beijing, China \\
  \textsuperscript{2}University of the Sunshine Coast, Australia \\
  \textsuperscript{*}Corresponding author: \texttt{xinli@bit.edu.cn}
}

\begin{document}

\maketitle

\begin{abstract}

We propose \underline{\textbf{S}}core-based \underline{\textbf{R}}elaxation-guided \underline{\textbf{G}}eneration (SRG), a generative framework based on an approximate formulation of relaxation-guided stochastic differential equations (SDEs) for mixed-integer linear programming. SRG employs a Transformer-based score network that incorporates feasibility and optimality signals into score modeling, encouraging the learned generative model to place more probability mass on feasible, high-quality regions of the solution space.  At inference time, SRG directly samples diverse candidate solutions from the learned score model without requiring any additional guidance module. These candidates are then used to construct compact trust-region subproblems for standard MILP solvers. Across multiple public benchmarks, SRG matches or improves upon the solution quality of the strongest learning-based baselines, with particularly strong gains in challenging candidate-generation settings. Moreover, SRG shows promising zero-shot transferability to unseen cross-scale and cross-problem instances, improving solver objectives and reducing search time in several cases through higher-quality initial candidates and compact trust-region search. 


\end{abstract}

\section{Introduction}
Mixed Integer Linear Programming (MILP) can formulate a wide range of combinatorial optimization problems, many of which are NP-hard~\citep{Karp1972}. MILP has been widely applied in production planning~\citep{ye2023deepaco}, resource allocation~\citep{ejaz2025comprehensivesurveylinearinteger}, and energy management~\citep{pinzon2017milp}, as well as in emerging applications across advanced manufacturing. However, due to the inherent NP-hardness of MILP, finding provably optimal solutions is computationally intractable for large-scale problems. Although exact algorithms such as branch-and-bound and cutting-plane methods \citep{1958Outline} provide optimality guarantees, they often incur prohibitive computational costs on large instances, limiting their practicality in time-sensitive real-world applications~\citep{bengio2020machinelearningcombinatorialoptimization}.

Recently, machine learning has shown great potential in accelerating MILP solving by leveraging structural similarities across problem instances and reusing decision knowledge~\citep{Gasse19,2020Hybrid,ling2024learningstopcutgeneration}. Among these approaches, Predict-and-Search (PaS) has emerged as a particularly effective paradigm~\citep{Han23,geng2025differentiable,liu2025apollomilpalternatingpredictioncorrectionneural}, which can significantly reduce the runtime of traditional MILP solvers. Specifically, PaS methods employ Graph Neural Networks (GNNs) over bipartite variable-constraint graphs to estimate the marginal probabilities of variable assignments, thereby identifying promising subspaces or critical structures where high-quality solutions are likely to reside. A conventional exact MILP solver (e.g., SCIP or Gurobi) is then invoked to search within these reduced regions to find near-optimal solutions efficiently.

Despite the success of existing PaS paradigms in solving general MILP problems, the models used in the prediction stage are typically trained by imitating solution labels, without explicitly incorporating feasibility and optimality information into the training process. As a result, the predictions may not be well aligned with feasible and high-quality regions of the MILP solution space. In addition, since PaS methods usually rely on deterministic or weakly stochastic predictions, the generated candidates may lack sufficient diversity, thereby restricting the coverage of promising solution regions. Consequently, the constructed trust regions can be imprecise or overly narrow~\citep{Han23}, increasing downstream search costs and ultimately limiting solution quality.
\begin{figure*}
\centering
\includegraphics[width=1\textwidth]{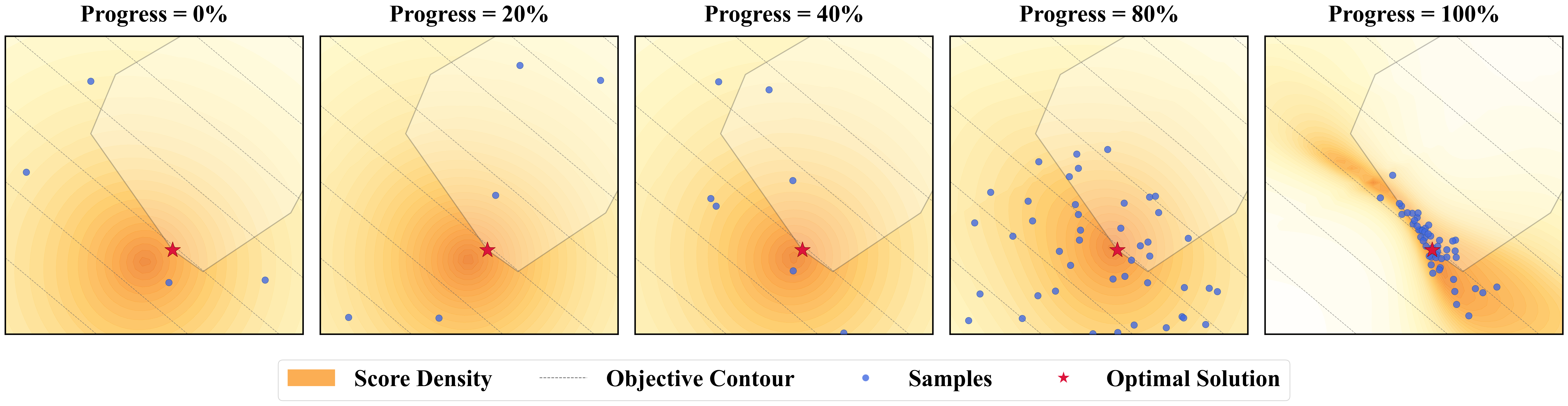}
\caption{Generation trajectory of our method on a 2D LP-relaxation toy experiment, which can be viewed as a continuous special case of MILP. 
\textbf{As the denoising process evolves, the grid-based score-density proxy, visualized by the yellow heatmap, highlights regions increasingly closer to the optimal solution (red star), while the generated samples (blue dots) move from an initially scattered state into a tight cluster near the optimum.} Additional experimental details are provided in Appx.~\ref{app:toyexperiments}.}
\label{fig:cross-problemtime}
\end{figure*} 

Motivated by these limitations, we propose \underline{\textbf{S}}core-based \underline{\textbf{R}}elaxation-guided \underline{\textbf{G}}eneration (SRG), a generative framework that incorporates feasibility and optimality information into score-based solution generation. 
Instead of relying solely on supervised imitation of solver labels, SRG formulates a regularized generative objective that combines the empirical solution distribution with relaxation-based feasibility and optimality signals. 
This objective can be interpreted as defining a refined target distribution, in which probability mass is reweighted toward feasible and high-quality regions. 
Based on this formulation, we derive an approximate relaxation-guided score and use it as a surrogate training target for a conditional score network.

The resulting model learns to generate solution candidates by denoising Gaussian noise toward regions favored by both the data distribution and the relaxation-guided signals, as illustrated in Figure~\ref{fig:cross-problemtime}. 
After training, SRG samples directly from the learned score model without requiring an additional guidance module at inference time. 
Repeated parallel sampling further provides diverse candidates, which are used to construct compact trust-region subproblems for standard MILP solvers. 
This design preserves the practical advantages of PaS-style solver refinement while improving candidate diversity and incorporating optimization-aware information during learning. 
\section{Related Work}

Mixed Integer Linear Programming (MILP) problems are challenging due to their hybrid decision spaces involving both continuous and discrete variables~\citep{bengio2020machinelearningcombinatorialoptimization}. Classical exact solvers, such as branch-and-bound~\citep{Land1960} and cutting-plane methods~\citep{repec:spr:sprchp:978-3-540-68279-0_4}, rely heavily on relaxation-based reasoning to reduce the search space and obtain valid bounds. In particular, relaxation techniques such as linear programming relaxation transform the original MILP into more tractable subproblems by relaxing integrality constraints~\citep{ichikawa2024controllingcontinuousrelaxationcombinatorial,cotter2018optimizationnondifferentiableconstraintsapplications}. The resulting relaxed problems provide useful feasibility and optimality information, which has long been central to the design of efficient MILP algorithms.

In recent years, machine learning-based methods have emerged as a promising direction for accelerating MILP solving. Among them, Predict-and-Search (PaS) frameworks~\citep{Han23,pmlr-v235-huang24f} have demonstrated strong empirical performance. PaS methods typically first predict high-quality candidate solutions or search regions, and then pass them to standard MILP solvers for subsequent search. By restricting the solver to promising regions around predicted solutions, PaS methods can significantly improve computational efficiency on large-scale MILPs.

Diffusion models can learn complex distributions and generate multiple diverse candidate solutions through repeated sampling \citep{rombach2022high,ho2020denoising}, thereby providing a flexible generative paradigm for MILP solving. Recently, diffusion-based solvers have achieved promising results on combinatorial optimization problems with special structures, such as the Traveling Salesman Problem (TSP) and Maximum Independent Set (MIS)~\citep{Sun23,li2023from}. However, directly extending existing diffusion-based solvers to general MILPs remains challenging, since most MILPs do not possess such special structures. Moreover, if feasibility information is not explicitly incorporated into the generative process, the sampled candidates may not be well aligned with feasible and high-quality solution regions, thereby limiting the final solution quality.

\section{Preliminaries}
\textbf{Mixed Integer Linear Programming (MILP)} optimizes a linear objective subject to linear constraints and integrality restrictions.
For an instance with \(m\) constraints and \(n\) variables, the standard MILP formulation is:
\begin{equation}
\min_{x \in \mathbb{R}^{n}} c^\top x \quad \text{s.t.} \quad A x \geq b, \; l \leq x \leq u, \; x_{j}\in\mathbb{Z}, \; \forall j\in I
\label{eq:milp}
\end{equation}
where \(A\in\mathbb{R}^{m\times n}\), \(c\in\mathbb{R}^{n}\), \(b\in\mathbb{R}^{m}\), \(l,u\in(\mathbb{R}\cup\{\pm \infty\})^{n}\), and
\(I\subseteq\{1,\dots,n\}\) denotes the set of indices for integer-constrained variables. The feasible region is:
\begin{equation*}
D \;=\;
\Bigl\{
x \in \mathbb{R}^{n}
\;\bigm|\;
A x \geq b,\;
l \le x \le u,\;
x_{j} \in \mathbb{Z},\; \forall j \in I
\Bigr\}.
\end{equation*}

\textbf{MILP Bipartite Graph Representation.}
Following~\citep{Gasse19}, any MILP instance $(A, b, c)$ can be represented as a bipartite graph $\mathcal{G} = (\mathcal{V}, \mathcal{C}, \mathcal{E})$, where $\mathcal{V}$ and $\mathcal{C}$ denote variable and constraint nodes, respectively. This graph is then encoded into an embedding $\mathbf{g} = \tau_{\phi}(A, b, c)$ via a 2-layer GNN encoder $\tau_{\phi}$, which captures variable--constraint interactions in a compact form. The embedding $\mathbf{g}$ is used as the structural conditioning input for downstream learning modules.


\textbf{Continuous Relaxation for MILP.}
A standard tool in both classical solvers and learning-based methods is 
\emph{continuous relaxation}, where integrality constraints are relaxed and the variables are optimized in a continuous domain. 
For binary variables, this corresponds to relaxing $\{0,1\}$ to $[0,1]$. 
This relaxation allows both the objective value and constraint violations to be evaluated on continuous candidates.

Following relaxation- and penalty-based formulations~\citep{ichikawa2024controllingcontinuousrelaxationcombinatorial}, one can define a penalty-relaxed loss:
\begin{equation}
\hat{\ell}(x;\lambda)
=
\hat{f}(x)
+
\sum_{i=1}^{m} \lambda_i \hat{v}_i(x),
\label{eq:relaxation_loss}
\end{equation}
where $\hat{f}(x)=c^\top x$ is the relaxed objective, and $\hat{v}_i(x)$ measures the violation of the $i$-th constraint. 
For an inequality constraint $a_i^\top x \ge b_i$, there exists:
\begin{equation}
\hat{v}_i(x)
=
\max\{b_i-a_i^\top x,0\},
\end{equation}
while constraints of the form $a_i^\top x \le b_i$ can be handled symmetrically. 
The coefficient $\lambda_i\ge0$ controls the trade-off between objective optimization and constraint satisfaction. 
If equality constraints $h_j(x)=0$ are present, they can be incorporated through penalties such as $\hat{v}^{=}_j(x)=h_j(x)^2$.

Notably, Eq.~\eqref{eq:relaxation_loss} provides two complementary signals for a relaxed candidate $x$: 
\emph{objective quality}, through $\hat{f}(x)$, and \emph{feasibility}, through the weighted violation term. In this paper, \textbf{S}RG uses these objective and feasibility terms as guidance signals for score-based generative modeling, enabling the model to learn a distribution over high-quality solution regions and generate diverse candidates for downstream solver refinement.

\section{Our Method}
We propose \underline{\textbf{S}}core-based \underline{\textbf{R}}elaxation-guided \underline{\textbf{G}}eneration (SRG), which formulates MILP solution prediction as sampling from a refined target distribution. The framework consists of two core components: (1) \underline{\textit{Target Formulation}}, which incorporates relaxation terms to construct a reshaped probability density that concentrates mass on feasible and high-quality solutions; and (2) \underline{\textit{Reverse Sampling}}, which samples from this target distribution via a reverse-time SDE, injecting gradients derived from approximate feasibility and optimality terms into the denoising dynamics to steer generation toward desirable solution regions.

\subsection{Approximate Relaxation-guided Score Function}

Let $p_{\mathrm{data}}^\sigma(x|\mathbf{g})$  denote a continuous reference density of high-quality solutions for MILP instance $\mathbf{g}$. Given a reference optimal solution $x^\star$, we define:
\[
p_{\mathrm{data}}^\sigma(x|\mathbf{g})=\mathcal N(x;x^\star,\sigma^2 I).
\]
$p_{\mathrm{data}}^\sigma(x|\mathbf{g})$ assigns high probability in the neighborhood of $x^*$. For notational simplicity, we write $p_{\text{data}}(x \mid \mathbf{g})$ for $p_{\mathrm{data}}^\sigma(x \mid \mathbf{g})$ throughout the paper.

Leveraging the bipartite graph embedding $\mathbf{g}$ as the structural condition, we formulate MILP solution prediction as a conditional generative task. Our goal is to learn a parameterized distribution $q_\theta(x|\mathbf{g})$ that approximates the target distribution of optimal solutions $p_{\mathrm{data}}^\sigma(x|\mathbf{g})$. Unlike deterministic approaches that output a single prediction, this probabilistic formulation enables sampling of diverse candidate solutions from the learned distribution, thereby increasing the likelihood of finding high-quality solutions. A standard approach to achieve this is to minimize the KL divergence:
\begin{equation}
\min_{q} D_{\mathrm{KL}}\left( q_{\theta}(x|\mathbf{g}) \|p_{\mathrm{data}}(x|\mathbf{g}) \right)
\label{eq:baseline_kl}
\end{equation}
However, directly minimizing Eq.~(\ref{eq:baseline_kl}) remains a purely data-driven approach that does not explicitly leverage feasibility or optimality signals intrinsic to the MILP structure, potentially causing generated solutions to deviate from feasible and near-optimal regions under the same finite sampling budget. Moreover, relying solely on data-driven alignment can make the model overly dependent on dataset-specific correlations, specifically given finite training samples and limited solver computation time \citep{Sun23}, increasing susceptibility to dataset bias.

Motivated by Eq.~\ref{eq:relaxation_loss}, which establishes that relaxed solutions possess desirable feasibility and optimality guarantees for MILP problems, we propose to integrate relaxation-based objective and feasibility terms as explicit guidance into probability distribution modeling.

Specifically, we introduce an optimality reference term $\mathcal{O}(x) = \|c \odot (x - x^*)\|_1$, where $\odot$ denotes element-wise product, and a feasibility reference term $\mathcal{P}(x)=\lambda\|\max\{b-Ax,0\}\|_1$, where $x^*$ is the optimum. Unlike the classical Lagrangian penalty $\max_{\lambda\ge 0}\lambda^{\top}(b-Ax)$, which becomes unbounded on infeasible points (as the term $\longrightarrow \infty$ when $Ax \not\geq b$), our $\mathcal{P}(x)$ provides a finite-valued signal for all $x \in \mathbb{R}^n$. Incorporating these terms yields the following optimization objective:
\begin{equation}
\min_{q_\theta} \left[ D_{\mathrm{KL}}(q_\theta(x|\mathbf{g}) \| p_{\text{data}}(x|\mathbf{g})) + \mathbb{E}_{x \sim q_\theta}\left[\gamma_o \mathcal{O}(x) + \gamma_c \mathcal{P}(x)\right]\right] 
\label{eq:total_optimization}
\end{equation}
where $\gamma_o, \gamma_c \in \mathbb{R}^{+}$ are guidance coefficients. The following proposition establishes that optimizing Eq.~(\ref{eq:total_optimization}) is theoretically equivalent to performing KL alignment toward a refined target distribution $\tilde{p}(x \mid \mathbf{g})$, concentrated on high-quality regions~\citep{ouyang2022traininglanguagemodelsfollow,rafailov2024directpreferenceoptimizationlanguage}.
\begin{proposition}[\underline{\textbf{O}}ptimization \underline{\textbf{E}}quivalence]
\label{prop:equivalence_MAIN}
Minimizing the regularized objective in Eq.~(\ref{eq:total_optimization}) is equivalent to minimizing
\begin{equation}
\min_{q_\theta} D_{\mathrm{KL}}\left(q_\theta(x|\mathbf{g}) \| \tilde{p}(x|\mathbf{g})\right)
\end{equation}
where the refined target distribution is
\begin{equation}
\tilde{p}(x|\mathbf{g}) := \frac{1}{Z} p_{\text{data}}(x|\mathbf{g}) \exp\left(-\gamma_o \mathcal{O}(x) - \gamma_c \mathcal{P}(x)\right),\, x\sim \tilde{p}
\label{eq:optimization_eq}
\end{equation}
and $Z$ is the normalization constant.
\end{proposition}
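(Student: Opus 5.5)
The plan is to expand both objectives into integrals over $q_\theta$ and show they differ only by the constant $\log Z$, which does not depend on $q_\theta$. Write $E(x) := \gamma_o\mathcal{O}(x) + \gamma_c\mathcal{P}(x)$.

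\textbf{Well-definedness.} First check that $\tilde p$ is a genuine density, i.e. $0<Z<\infty$.
\begin{itemize}[leftmargin=*]
\item Both $\mathcal{O}$ and $\mathcal{P}$ are nonnegative and finite everywhere. This is exactly the finiteness property emphasized for $\mathcal{P}(x)$, as opposed to the Lagrangian penalty.
\item Hence $0 < \exp(-E(x)) \le 1$.
\item Since $p_{\mathrm{data}}(x|\mathbf{g}) = \mathcal N(x;x^\star,\sigma^2 I)$ is strictly positive and integrates to one, $Z = \int p_{\mathrm{data}}(x|\mathbf{g})\, e^{-E(x)}\,dx \in (0,1]$.
\item Moreover $\tilde p$ has full support and is absolutely continuous with respect to Lebesgue measure, as $p_{\mathrm{data}}$ is.
\end{itemize}

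\textbf{Computation.} Restrict to densities $q_\theta$ for which $D_{\mathrm{KL}}(q_\theta\|p_{\mathrm{data}})$ is finite; otherwise both objectives are $+\infty$. From the definition,
\[
\log \tilde p(x|\mathbf{g}) = \log p_{\mathrm{data}}(x|\mathbf{g}) - E(x) - \log Z.
\]
Therefore
\begin{align*}
D_{\mathrm{KL}}(q_\theta\|\tilde p) &= \mathbb{E}_{q_\theta}\!\left[\log q_\theta - \log p_{\mathrm{data}} + E(x)\right] + \log Z \\
&= D_{\mathrm{KL}}(q_\theta\|p_{\mathrm{data}}) + \mathbb{E}_{x\sim q_\theta}\!\left[\gamma_o\mathcal{O}(x)+\gamma_c\mathcal{P}(x)\right] + \log Z.
\end{align*}
The right-hand side is the objective in Eq.~(\ref{eq:total_optimization}) plus the constant $\log Z$.

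\textbf{Conclusion.} Since $\log Z$ is finite and independent of $\theta$, the two objectives have the same minimizers and the same ordering over any family $\{q_\theta\}$. This is the claimed equivalence. Over the unrestricted class of densities, Gibbs' inequality further shows the unique minimizer is $q=\tilde p$, with minimal value $-\log Z$ for Eq.~(\ref{eq:total_optimization}).

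\textbf{Main obstacle: integrability in the splitting step.} The step $\mathbb{E}_{q}[\log q - \log \tilde p] = \mathbb{E}_q[\log q - \log p_{\mathrm{data}}] + \mathbb{E}_q[E]$ needs care when $\mathbb{E}_q[E]$ might be infinite.
\begin{itemize}[leftmargin=*]
\item Because $E$ grows at most linearly, $E(x) \le C(1+\|x\|_1)$.
\item Because $-\log p_{\mathrm{data}}$ grows quadratically, finiteness of $D_{\mathrm{KL}}(q\|p_{\mathrm{data}})$ forces $q$ to have a finite second moment. This is the Donsker–Varadhan inequality applied with a small quadratic test function.
\item A finite second moment gives a finite first moment, so $\mathbb{E}_q[E] < \infty$ and all terms are finite.
\item In the opposite direction, $E \ge 0$ gives $D_{\mathrm{KL}}(q\|\tilde p) \ge D_{\mathrm{KL}}(q\|p_{\mathrm{data}}) + \log Z$. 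So finiteness of the right-hand side of the identity implies finiteness of the left, and the two sides are infinite simultaneously.
\end{itemize}
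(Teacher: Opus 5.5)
Your proof is correct and follows the paper's argument: both rest on the identity $D_{\mathrm{KL}}(q\|p_{\text{data}}) + \mathbb{E}_q[\gamma_o\mathcal{O}+\gamma_c\mathcal{P}] = D_{\mathrm{KL}}(q\|\tilde p) - \log Z$, together with the observation that $\log Z$ does not depend on $q$. Your extra checks ($0<Z\le 1$ and integrability of the splitting) are rigor the paper omits. The moment argument is valid but not needed: since $\gamma_o\mathcal{O}+\gamma_c\mathcal{P}\ge 0$ and the negative part of $\log(q/p_{\text{data}})$ is $q$-integrable, the splitting already holds in $(-\infty,+\infty]$.
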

\textit{Proof.} See Appx.~\ref{app:lsde}.

\textbf{Reverse-time SDE for Gaussian distribution to $\tilde{p}$.}
To sample from the target distribution $\tilde{p}$ defined in Proposition~\ref{prop:equivalence_MAIN}, we adopt a score-based generative modeling framework~\citep{song2021scorebased}, starting from a Gaussian distribution. The following proposition provides the corresponding reverse-time SDE and its score function decomposition; The detailed derivation is in Appendix~\ref{app:proof_reverse_sde}.
\begin{proposition}[\underline{\textbf{R}}everse-time \underline{\textbf{S}}DE and its \underline{\textbf{S}}core \underline{\textbf{F}}unction  $\nabla_{x_t} \log \tilde p_t(x_t |\mathbf{g})$]\label{prop:reverse_sde}
Assume $\tilde p_t(x \mid g) > 0$ for all $(x,t) \in \mathbb{R}^n \times [0,T]$ and that $\nabla_x \log \tilde p_t(x \mid g)$ is well-defined. Then the reverse-time process
\begin{equation}
    d x_t = \Big[ -\tfrac{1}{2}\beta(t)\, x_t - \beta(t)\, \nabla_{x_t}\! \log \tilde p_t(x_t \mid \mathbf{g}) \Big]\, dt + \sqrt{\beta(t)}\, d\bar W_t
\label{eq:reverse_sde1}
\end{equation}
has marginals equal to those of the forward process. Here $d\bar W_t$ denotes a standard Wiener process running backward in time from $T$ to $0$. For every $t \in [0,T]$, the score of $\tilde p_t$ in Eq.~\ref{eq:reverse_sde1} admits the following decomposition:
\begin{equation}
    \nabla_{x_t} \log \tilde p_t(x_t |\mathbf{g})
    = 
    \nabla_{x_t} \log p_t(x_t|\mathbf{g})
    + 
    \nabla_{x_t} \log \mathbb{E}_{x_0 \sim p(x_0 \mid x_t, \mathbf{g})}\! \big[ w(x_0) \big],
\label{eq:score_exact}
\end{equation}
where $p_t(\cdot \mid g)$ is the marginal at time $t$ of the forward process started at $x_0 \sim p_{\text{data}}(\cdot \mid g)$, and $p(x_0 \mid x_t, g)$ is the corresponding posterior under that process.
\label{prop:exactprop}
\end{proposition}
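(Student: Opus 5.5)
Two claims need proof: that the reverse-time SDE reproduces the forward marginals, and that the score decomposes as in Eq.~(\ref{eq:score_exact}). Throughout, $w(x_0) := \exp(-\gamma_o\mathcal{O}(x_0)-\gamma_c\mathcal{P}(x_0))$, so that $\tilde p(x_0\mid\mathbf{g}) = p_{\text{data}}(x_0\mid\mathbf{g})\,w(x_0)/Z$ by Proposition~\ref{prop:equivalence_MAIN}.

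\textbf{Step 1: the forward process and its marginals.} The forward process is the variance-preserving SDE
\[
dx_t = -\tfrac12\beta(t)x_t\,dt + \sqrt{\beta(t)}\,dW_t,
\]
started at $x_0\sim\tilde p(\cdot\mid\mathbf{g})$. Its transition kernel is Gaussian:
\[
p(x_t\mid x_0)=\mathcal N\bigl(x_t;\alpha_t x_0,(1-\alpha_t^2)I\bigr),\qquad \alpha_t=\exp\Bigl(-\tfrac12\int_0^t\beta(s)\,ds\Bigr).
\]
Hence $\tilde p_t(x_t\mid\mathbf{g})=\int p(x_t\mid x_0)\,\tilde p(x_0\mid\mathbf{g})\,dx_0$.

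\textbf{Step 2: the reverse SDE.} This is Anderson's time-reversal theorem, as used in \citep{song2021scorebased}. The forward marginals solve the Fokker--Planck equation $\partial_t\tilde p_t=\nabla\cdot(\tfrac12\beta x\tilde p_t)+\tfrac12\beta\Delta\tilde p_t$. Write $\Delta\tilde p_t=\nabla\cdot(\tilde p_t\nabla\log\tilde p_t)$, which uses the positivity and differentiability assumptions. Then the time-reversed density $q_s=\tilde p_{T-s}$ satisfies the Fokker--Planck equation of a diffusion with the same coefficient $\sqrt{\beta}$ and drift $\tfrac12\beta x+\beta\nabla\log\tilde p$. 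Converting this back to backward time $t$ flips the sign of $dt$ and gives the drift $-\tfrac12\beta x_t-\beta\nabla\log\tilde p_t$ of Eq.~(\ref{eq:reverse_sde1}). Uniqueness of the Fokker--Planck solution with terminal condition $\tilde p_T$ then yields equality of the marginals. I would cite the theorem rather than re-derive it, only checking that the sign conventions match $d\bar W_t$ running backward.

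\textbf{Step 3: the score decomposition.} This is the substantive part. Substitute the definition of $\tilde p$ into Step~1:
\[
\tilde p_t(x_t\mid\mathbf{g})=\frac1Z\int p(x_t\mid x_0)\,p_{\text{data}}(x_0\mid\mathbf{g})\,w(x_0)\,dx_0 .
\]
Multiply and divide by $p_t(x_t\mid\mathbf{g})=\int p(x_t\mid x_0)\,p_{\text{data}}(x_0\mid\mathbf{g})\,dx_0$, the marginal of the same kernel started from $p_{\text{data}}$. By Bayes' rule, $p(x_t\mid x_0)\,p_{\text{data}}(x_0\mid\mathbf{g})/p_t(x_t\mid\mathbf{g})=p(x_0\mid x_t,\mathbf{g})$. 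This gives the exact identity
\[
\tilde p_t(x_t\mid\mathbf{g})=\frac1Z\,p_t(x_t\mid\mathbf{g})\;\mathbb{E}_{x_0\sim p(x_0\mid x_t,\mathbf{g})}[w(x_0)].
\]
Take logarithms; $\log Z$ is independent of $x_t$. Applying $\nabla_{x_t}$ then gives Eq.~(\ref{eq:score_exact}).

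\textbf{Main obstacle: regularity.} The identity in Step~3 is purely algebraic; what needs care is that every term is well-defined.
\begin{itemize}
\item $Z<\infty$: this holds because $0<w\le 1$, as $\mathcal O,\mathcal P\ge0$.
\item $\mathbb{E}[w]>0$: this follows from the same bounds $0<w\le1$.
\item The Gaussian convolutions are smooth and strictly positive for $t>0$, since $p_{\text{data}}$ is Gaussian. Differentiation under the integral is justified by dominated convergence, because $|w|\le1$ and the Gaussian kernel's $x_t$-derivatives are integrable.
\item At $t=0$ the second term reduces to $\nabla\log w$. This is only defined almost everywhere, because the $\ell_1$ norms in $\mathcal O$ and $\mathcal P$ are nonsmooth. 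That is exactly why the proposition states well-definedness of the score as a hypothesis, and I would invoke it there.
\end{itemize}
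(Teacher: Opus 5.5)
Your proposal is correct and follows the paper's proof essentially step for step. You use Anderson's time-reversal theorem specialized to the VP-SDE for the reverse dynamics, and the multiply-and-divide-by-$p_t$ plus Bayes' rule identity $\tilde p_t = \frac{1}{Z}\, p_t\, \mathbb{E}_{x_0\sim p(x_0\mid x_t,\mathbf{g})}[w(x_0)]$ for the score decomposition, with $0<w\le 1$ justifying differentiation under the integral. Your Fokker--Planck sketch of the reversal, and your remark that at $t=0$ the correction term reduces to $\nabla\log w$, which is defined only almost everywhere, are small additions that the paper leaves implicit (its regularity lemma covers only $t\in(0,T]$).
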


\textbf{Surrogate Approximation for the Score Function.} 
The guided score in Eq.~\ref{eq:score_exact} requires $\nabla_{x_t} \log \mathbb{E}_{x_0 \sim p(x_0 \mid x_t, \mathbf{g})}\! \big[ w(x_0) \big]$, which is intractable because the posterior $p(x_0|x_t,g)$ has no closed-form expression. Following the classifier-guidance line of work~\citep{dhariwal2021diffusion,chung2024diffusionposteriorsamplinggeneral}, we evaluate the guidance gradient at the current noisy sample $x_t$, motivated by two observations: \underline{(i)} Tweedie's formula~\citep{Efron2011TweediesFA} implies $\mu(x_t):=\mathbb{E}[x_0\mid x_t]\approx x_t$ in the low-noise regime ($\sigma_t\to 0$), and \underline{(ii)} although $w(x_0)=\exp(-\gamma_o\mathcal{O}(x_0)-\gamma_c\mathcal{P}(x_0))$ is non-smooth (due to the L1 kinks in $\mathcal{O}$ and ReLU boundaries in $\mathcal{P}$), Gaussian convolution in the forward process renders the conditional expectation $\mathbb{E}[w(x_0)\mid x_t]$ smooth in $x_t$, so the gradient is well-defined. This yields the surrogate
\begin{equation}
\nabla_{x_t} \log \mathbb{E}_{x_0 \sim p(x_0 \mid x_t, \mathbf{g})}\! \big[ w(x_0) \big]
\;\approx\; 
\nabla_{x_t} \log w(x_t).
\label{eq:taylor_score_approx}
\end{equation}
Substituting $w(x)=\exp(-\gamma_o\mathcal{O}(x)-\gamma_c\mathcal{P}(x))$ into Eq.~\ref{eq:taylor_score_approx} (with $\nabla\mathcal{O}, \nabla\mathcal{P}$ taken as Clarke subgradients at non-smooth points), we obtain the surrogate guided score:
\begin{equation} 
s^{*}(x_t,t,g) \approx \nabla_{x_t}\log p_t(x_t|g) -\gamma_o\nabla_{x_t}\mathcal{O}(x_t) -\gamma_c\nabla_{x_t}\mathcal{P}(x_t). 
\label{eq:modified_score_function_MAIN} 
\end{equation} 
We emphasize that the derivation above is intended as an intuitive motivation, and Eq.~\ref{eq:modified_score_function_MAIN} is a tractable surrogate that captures the dominant feasibility and optimality signals of the refined score, but does not exactly characterize it at all noise levels. Instead, we treat Eq.~\ref{eq:modified_score_function_MAIN} as a practical training target whose effectiveness is empirically validated in our main solving results.
\subsection{Conditional Score Network in $\epsilon$-space}
\paragraph{Transformer-based $s_{\epsilon}(x_t, t, \mathbf{g})$.} To learn the guided score $s^*$ in Eq.~(\ref{eq:modified_score_function_MAIN}), motivated by the favorable scalability of the Diffusion Transformer (DiT) architecture~\citep{peebles2023scalablediffusionmodelstransformers}, we instantiate the conditional score network in $\epsilon$-space $s_{\epsilon}(x_t, t, \mathbf{g})$ as a lightweight transformer. 

Each MILP instance is first encoded into a structural embedding $\mathbf{g} = \tau_{\phi}(A, b, c) \in \mathbb{R}^{d_{n}}$ via a GNN encoder. The noisy solution $x_t \in \mathbb{R}^{B \times C \times H \times W}$ is partitioned into non-overlapping $p\times p$ patches and projected into a token sequence $\mathbf{z} \in \mathbb{R}^{B \times N \times D}$ with $N = \lfloor H/p \rfloor \cdot \lfloor W/p \rfloor$, augmented by learnable positional embeddings. The timestep $t$ and the structural embedding $\mathbf{g}$ are each projected to $\mathbb{R}^{D}$ and summed into a conditioning vector $\mathbf{c} \in \mathbb{R}^{B \times D}$.Here, $B$ is the batch size, $C$ is the number of channels, $(H, W)$ are the spatial dimensions of the noisy solution, $p$ is the patch size, $N = \lfloor H/p\rfloor \cdot \lfloor W/p\rfloor$ is the number of patch tokens, and $D$ is the token embedding dimension.

Our network stacks $L$ transformer blocks, each applying Adaptive Layer Normalization (AdaLN)-conditioned self-attention and an MLP with zero-initialized gating (AdaLN-Zero):
\begin{equation}
\mathbf{z}' = \mathbf{z} + \alpha_1 \odot \mathrm{MHA}\!\left(\mathrm{AdaLN}(\mathbf{z}, \mathbf{c})\right), \qquad
\mathbf{z}'' = \mathbf{z}' + \alpha_2 \odot \mathrm{MLP}\!\left(\mathrm{AdaLN}(\mathbf{z}', \mathbf{c})\right),
\end{equation}
where $\mathrm{AdaLN}(\mathbf{z}, \mathbf{c}) = (1 + \boldsymbol{\gamma}(\mathbf{c})) \odot \mathrm{LayerNorm}(\mathbf{z}) + \boldsymbol{\beta}(\mathbf{c})$ and the gates $\alpha_1, \alpha_2$ are zero-initialized for training stability. Self-attention lets every patch token attend to all others, capturing global inter-variable dependencies without the locality bias of convolutional architectures. After $L$ blocks, the output tokens are projected back to pixel space and rearranged into $\hat{\boldsymbol{\epsilon}} \in \mathbb{R}^{B \times C \times H \times W}$.

\paragraph{Cross-scale adaptation.} Although the network is trained at a fixed resolution and a fixed structural-embedding dimension $d_n$, we design mechanisms to enable inference on MILPs of arbitrary size without retraining: \underline{(i)} The learnable positional embedding is bilinearly interpolated to the test-time patch grid; \underline{(ii)} The unpatched output is resampled back to the original size of $x_t$, removing the requirement that $(H,W)$ be divisible by the patch size; \underline{(iii)} The embedding $\mathbf{g}' \in \mathbb{R}^{d_n'}$ is adapted to $d_n$ before the conditioning projection via 1D linear interpolation when $d_n' > d_n$ and zero-padding when $d_n' < d_n$. Combined with the natural length-flexibility of self-attention over token sequences, a single trained score network generalizes to MILP instances with varying scales.

\subsection{Training and Sampling}

\textbf{Training.} To learn the approximate relaxation-guided score function, we minimize the following score matching loss:
\begin{equation}
\mathcal{L}_{\text{Relaxed}} = \mathbb{E}\| s - s_{\text{target}} \|_2^2
\end{equation}
where $s_{\text{target}}$ is $s^*$ in Eq.~(\ref{eq:modified_score_function_MAIN}). 
Following~\citep{song2021scorebased}, we generate noisy samples via the forward process $x_t = \sqrt{\bar{\alpha}_t}\, x^* + \sqrt{1-\bar{\alpha}_t}\, \epsilon$ with $\epsilon \sim \mathcal{N}(0, I)$ \citep{ho2020denoising}. The objective deviation subgradient is $\nabla_{x_t} \mathcal{O}(x_t) = c \odot \operatorname{sign}(c \odot (x_t - x^*)) = c \odot \delta_{\mathcal{O}}$, and the constraint penalty gradient $\nabla_x \mathcal{P}(x) = -\lambda A^\top \mathbf{1}\!\left[ Ax < b \right]$, where $\mathbf{1}[\cdot]$ is the elementwise indicator.

Considering the discrete structure of the MILP solution space and the non-smoothness of $\nabla_{x_t} \mathcal{O}(x_t)$ and $\nabla_x \mathcal{P}(x)$, we perform the following simplifications to stabilize training. We approximate $\nabla_{x_t} \mathcal{O}(x_t)$ with $\nabla_{x_t} \mathcal{O}(\tilde{x}_t) = c \odot \tilde{\delta}_{\mathcal{O}} = c\odot \operatorname{sign}(c \odot (\tilde{x}_t - x^*))$, where the proxy $\tilde{x}_t$ is obtained by projecting integer coordinates in $x_{t_j}$ ($j \in I$) to their nearest discrete values while keeping continuous coordinates unchanged. We use these proxies to better reflect the discrete and hybrid structure of the MILP solution space. Meanwhile, we simplify the constraint penalty gradient $\nabla_{x_t} \mathcal{P}(x_t) = -\lambda A^\top 1_{Ax<b}$ to $\nabla_{x_t} \mathcal{\tilde{P}}(x_t) = -\lambda A^\top$, pushing all $x_t$ toward the feasible region. 

Finally, by substituting these terms, converting to noise-prediction parameterization via $\nabla_{x_t} \log p_t(x_t|\mathbf{g}) \approx -\epsilon_t / \sqrt{1 - \bar{\alpha}_t}$~\citep{song2021scorebased}, and multiplying through by $-\sqrt{1 - \bar{\alpha}_t}$, we obtain the simplified surrogate training objective $\mathcal{L}_\text{simplified}$ with the simplified stop gradient ($\texttt{sg}$) target in Eq.~\ref{eq:simplified_loss_time}, which is defined in $\epsilon$-space with Denoising Diffusion Probabilistic Model (DDPM) style learning~\citep{ho2020denoising} with the timestep variable $t$:
\begin{equation}
\begin{aligned}
\quad \quad \mathcal{L}_{\text{simplified}} = &\left\| s_\epsilon(x_t, t, \mathbf{g}) - \texttt{sg}\left( \epsilon_t + \textbf{v}_o+ \textbf{v}_c \right) \right\|_2^2 \\
\text{where} \quad \textbf{v}_o = \nabla_{x_t} \mathcal{O}(\tilde{x}_t) &= \gamma_o \sqrt{1 - \bar{\alpha}_t} c \odot \tilde{\delta}_{\mathcal{O}}, \quad \quad \textbf{v}_c(t) = \nabla_{x_t} \mathcal{\tilde{P}}(x_t) = -\gamma_c\sqrt{1 - \bar{\alpha}_t} \lambda A^\top
\label{eq:simplified_loss_time}
\end{aligned}
\end{equation}

\textbf{Adaptation rules for $\gamma_o$ and $\gamma_c$.}
In $L_{\text{simplified}}$ training, the terms $\textbf{v}_o$ and $\textbf{v}_c$ scale with the magnitudes of the problem data and timestep $t$, roughly with $\|c\|_2$ and $\|A\|_F$. Therefore, fixed $\gamma_o,\gamma_c$ may cause two main issues: \underline{\textbf{(1)}} the resulting guidance terms may dominate the base diffusion signal $\epsilon_t$, weakening the learning of the underlying data distribution; and \underline{\textbf{(2)}} optimization may become unstable. To mitigate these issues, we calibrate the effective guidance strength in an instance-adaptive and time-aware manner. Let
\[
\mathbf u_o \triangleq c\odot \tilde{\delta}_{\mathcal O},
\qquad
\mathbf u_c \triangleq -\lambda A^\top \mathbf 1,
\]
and define
\begin{equation}
\gamma_o
\leftarrow
\gamma_o\,\frac{\rho_o\sqrt{n}}{\|\mathbf u_o\|_2+\varepsilon} > 0,
\qquad
\gamma_c
\leftarrow
\gamma_c\,\frac{\rho_c\sqrt{n}}{\|\mathbf u_c\|_2+\varepsilon} > 0,
\label{eq:adaptive_gamma_main}
\end{equation}
where $\gamma_o,\gamma_c\in\mathbb R_{>0}$ are the guidance weights, $\rho_o,\rho_c\in\mathbb R_{>0}$ are guidance-strength parameters, and $\varepsilon>0$ is a small constant for numerical stability.

\textbf{Sampling.} At inference time, we sample \textit{solely} using the learned guided score model $s_{\theta}(\cdot)$, \textbf{without requiring any additional guidance computation, constraint evaluation, or external optimization components}. Specifically, we initialize from a prior $x_T \sim \mathcal{N}(0, I)$~\citep{ho2020denoising} and iteratively denoise using the trained score function to reach the guided distribution $\tilde{p}(x|\mathbf{g})$, which captures both feasibility and near-optimality. We then sample $\tilde{x}_0$, which remains relaxed, from $\tilde{p}(x|\mathbf{g})$ to construct the trust region for search. We use DDPM \citep{ho2020denoising} and DDIM \citep{song2022denoisingdiffusionimplicitmodels} samplers.

\textbf{Diverse Generation.} To leverage sampling diversity, we generate $k$ candidate solutions $\{ x^{j},\, j=1,\ldots, k \}$ in parallel using different random seeds. We then select $\tilde{x}^{*}$ as the candidate with the highest binary confidence. This strategy enhances robustness by exploring multiple high-probability regions before invoking downstream $L_1$ trust region search, following prior work \citep{Han23}.

\begin{table*}[htbp]
\centering
\small
\caption{Results on medium-scale MILP benchmarks. Results are averaged over \textit{50} test instances. ``$/$'' indicates subproblem $M_{\text{search}}(\tilde{x})$ is infeasible. ``$(\text{S})$'' and ``$(\text{G})$'' mean the underlying solver is SCIP and Gurobi, respectively. All baselines have a \underline{\textbf{1}}00s time limit to search for the best primal bound. }
\vspace{3pt}
\setlength{\tabcolsep}{4pt}
\scalebox{0.73}{
\begin{tabular}{l|cc|cc|cc|cc}
\toprule
\multirow{2}{*}{Method}  & \multicolumn{2}{c|}{\textbf{SC}} & \multicolumn{2}{c|}{\textbf{MIS}} & \multicolumn{2}{c|}{\textbf{CA}}  & \multicolumn{2}{c}{\textbf{CFL}} \\ 
\cmidrule(r){2-3} \cmidrule{4-5}  \cmidrule{6-7} \cmidrule{8-9} 
& Obj($\downarrow$) & $\mathrm{Gap}_{\mathrm{ref}}$($\downarrow$) & Obj($\uparrow$) & $\mathrm{Gap}_{\mathrm{ref}}$($\downarrow$) & Obj($\uparrow$) & $\mathrm{Gap}_{\mathrm{ref}}$($\downarrow$) & Obj($\downarrow$) & $\mathrm{Gap}_{\mathrm{ref}}$($\downarrow$) \\ 
\midrule
SCIP  \citep{bestuzheva2021scip} & $55.68 \pm 10.49$ & $0.00$  & $455.84 \pm 6.03$ & $0.00$ & $60797.78 \pm 1245.71$ & $0.00$ & $8858.28 \pm 446.19$ & $0.00$ \\ \hline
PaS(S) \citep{Han23}           & $55.68 \pm 10.49$ & $0.00$ & $455.32 \pm 6.80$ & $0.52$ & $60693.89 \pm 1184.15$ & $103.89$ & $8861.92 \pm 444.80$ & $3.64$ \\
ConPaS(S) \citep{pmlr-v235-huang24f}     & $55.68 \pm 10.49$ & $0.00$ & $455.44 \pm 6.38$ & $0.40$ & $\textbf{60693.89} \pm 1184.15$ & $\textbf{103.89}$ & $8861.93 \pm 444.80$ & $3.65$ \\
L2O-DiffILO(S) \citep{geng2025differentiable} & $55.68 \pm 10.49$ & $0.00$ & $452.02 \pm 9.26$ & $3.82$ & $58433.46 \pm 1468.10$ & $2364.32$ & $8858.28 \pm 446.19$ & $0.00$ \\
Apollo-MILP(S) \citep{liu2025apollomilpalternatingpredictioncorrectionneural}& $55.68 \pm 10.49$ & $0.00$ & $451.60 \pm 9.00$ & $4.24$ & $59131.99 \pm 1301.43$ & $1665.79$ & $8865.97 \pm 446.48$ & $7.69$ \\
\rowcolor{gray!20} \textbf{SRG(S)} \textbf{(Ours)} & $\textbf{55.68} \pm 10.49$ & $\textbf{0.00}$ & $\textbf{455.74} \pm 6.19$ & $\textbf{0.10}$ & $60349.62 \pm 1466.95$ & $448.16$ & $\textbf{8858.28} \pm 446.19$ & $\textbf{0.00}$ \\ \hline
Gurobi \citep{gurobi2023gurobi} & $55.68 \pm 10.49$ & $0.00$ & $455.90 \pm 5.93$ & $0.00$ & $62636.75 \pm 1239.87$ & $0.00$ & $8858.28 \pm 446.19$ & $0.00$ \\ \hline 
PaS(G)   \citep{Han23}         & $55.68 \pm 10.49$ & $0.00$ & $455.68 \pm 5.69$ & $0.22$ & $62919.95 \pm 1032.13$ & $-283.20$ & $8873.18 \pm 452.01$ & $14.90$ \\
ConPaS(G)  \citep{pmlr-v235-huang24f}    & $55.68 \pm 10.49$ & $0.00$ & $455.74 \pm 5.78$ & $0.16$ & $62990.09 \pm 1114.87$ & $-353.34$ & $8873.18 \pm 452.01$ & $14.90$ \\
L2O-DiffILO(G) \citep{geng2025differentiable} & $55.68 \pm 10.49$ & $0.00$ & $455.90 \pm 5.93$ & $0.00$ & $61847.63 \pm 1667.82$ & $789.12$ & $8858.28 \pm 446.19$ & $0.00$ \\
Apollo-MILP(G)  \citep{liu2025apollomilpalternatingpredictioncorrectionneural}& $55.68 \pm 10.49$ & $0.00$ & $455.90 \pm 5.93$ & $0.00$ & $61883.87 \pm 1278.21$ & $752.88$ & $8858.28 \pm 446.19$ & $0.00$ \\
\rowcolor{gray!20} \textbf{SRG(G)} \textbf{(Ours)} & $\textbf{55.68} \pm 10.49$ & $\textbf{0.00}$ & $\textbf{455.90} \pm 5.93$ & $\textbf{0.00}$ & $\textbf{63054.66} \pm 1183.44$ & $\textbf{-417.91}$ & $\textbf{8858.28} \pm 446.19$ & $\textbf{0.00}$ \\ \hline
\end{tabular}}
\label{tab:medium-scale}
\end{table*}

\begin{table*}[htbp]
\centering
\small
\caption{\underline{\textbf{C}}ross-\underline{\textbf{S}}cale results on large-scale MILP benchmarks. Results are averaged over \textit{20} test instances. All baseline solvers have a \underline{\textbf{6}}00s time limit to obtain the best primal bound. We \underline{\textbf{D}}irectly use the \underline{\textbf{P}}retrained SRG from \underline{\textbf{M}}edium-scale MILP benchmarks for large-scale solving.}
\vspace{3pt}
\setlength{\tabcolsep}{4pt}
\scalebox{0.73}{
\begin{tabular}{l|cc|cc|cc|cc}
\toprule
\multirow{2}{*}{Method}  & \multicolumn{2}{c|}{\textbf{SC}} & \multicolumn{2}{c|}{\textbf{MIS}} & \multicolumn{2}{c|}{\textbf{CA}}  & \multicolumn{2}{c}{\textbf{CFL}} \\ 
\cmidrule(r){2-3} \cmidrule{4-5}  \cmidrule{6-7} \cmidrule{8-9} 
& Obj($\downarrow$) & $\mathrm{Gap}_{\mathrm{ref}}$($\downarrow$) & Obj($\uparrow$) & $\mathrm{Gap}_{\mathrm{ref}}$($\downarrow$) & Obj($\uparrow$) & $\mathrm{Gap}_{\mathrm{ref}}$($\downarrow$) & Obj($\downarrow$) & $\mathrm{Gap}_{\mathrm{ref}}$($\downarrow$) \\ 
\midrule
SCIP \citep{bestuzheva2021scip}   & $13.85 \pm 3.89$ & $0.00$ & $912.40 \pm 7.00$ & $0.00$ & $11759.46 \pm 196.36$ & $0.00$ & $9667.92 \pm 503.55$ & $0.00$ \\ \hline
PaS(S) \citep{Han23}        & $13.90 \pm 3.94$ & $0.05$ & $911.15 \pm 8.31$ & $1.25$ & $11574.91 \pm 209.90$ & $184.55$ & $9672.72 \pm 501.40$ & $4.80$ \\  
ConPaS(S) \citep{pmlr-v235-huang24f}     & $13.90 \pm 3.94$ & $0.05$ & $910.90 \pm 8.56$ & $1.50$ & $11538.57 \pm 228.22$ & $220.89$ & $9672.72 \pm 501.40$ & $4.80$ \\
L2O-DiffILO(S) \citep{geng2025differentiable} & $13.90 \pm 3.88$ & $0.05$ & $910.55 \pm 8.04$ & $1.85$ & $11263.01 \pm 314.25$ & $496.45$ & $9667.92 \pm 503.55$ & $0.00$ \\  
Apollo-MILP(S) \citep{liu2025apollomilpalternatingpredictioncorrectionneural} & $13.85 \pm 3.89$ & $0.00$ & $910.35 \pm 8.14$ & $2.05$ & $11731.41 \pm 200.03$ & $229.14$ & $9667.92 \pm 503.55$ & $0.00$ \\
\rowcolor{gray!20} \textbf{SRG(S)} \textbf{(Ours)} & $\textbf{13.85} \pm 3.89$ & $\textbf{0.00}$ & $\textbf{911.60} \pm 8.21$ & $\textbf{0.80}$ & $\textbf{11743.23} \pm 143.75$ & $\textbf{16.23}$ & $\textbf{9667.92} \pm 503.55$ & $\textbf{0.00}$ \\ \hline
Gurobi \citep{gurobi2023gurobi} & $13.85 \pm 3.89$ & $0.00$ & $914.60 \pm 6.53$ & $0.00$ & $12188.28 \pm 142.94$ & $0.00$ & $9667.92 \pm 503.55$ & $0.00$ \\ \hline
PaS(G) \citep{Han23}        & $13.90 \pm 3.94$ & $0.05$ & $914.55 \pm 6.57$ & $0.05$ & $\textbf{12163.94} \pm 166.14$ & $\textbf{24.34}$ & $9680.52 \pm 498.68$ & $12.60$ \\
ConPaS(G) \citep{pmlr-v235-huang24f}      & $13.90 \pm 3.94$ & $0.05$ & $914.60 \pm 6.55$ & $0.00$ & $12149.87 \pm 137.89$ & $38.41$ & $9680.52 \pm 498.68$ & $12.60$ \\
L2O-DiffILO(G) \citep{geng2025differentiable} & $13.85 \pm 3.89$ & $0.00$ & $914.60 \pm 6.55$ & $0.00$ & $12136.92 \pm 176.17$ & $51.36$ & $9667.92 \pm 503.55$ & $0.00$ \\
Apollo-MILP(G) \citep{liu2025apollomilpalternatingpredictioncorrectionneural}  & $13.85 \pm 3.89$ & $0.00$ & $914.60 \pm 6.55$ & $0.00$ & $12121.56 \pm 140.25$ & $ 66.72$ & $9667.92 \pm 503.55$ & $0.00$ \\
\rowcolor{gray!20} \textbf{SRG(G)} \textbf{(Ours)} & $\textbf{13.85} \pm 3.89$ & $\textbf{0.00}$ & $\textbf{914.60} \pm 6.55$ & $\textbf{0.00}$ & $12125.72 \pm 165.77$ & $62.56$ & $\textbf{9667.92} \pm 503.55$ & $\textbf{0.00}$ \\ \hline
\end{tabular}}
\label{tab:large-scale}
\end{table*}


\section{Experiments}
\label{sec:expr}


\subsection{Settings}

\textbf{Dataset Benchmark.}
We evaluate on several public benchmarks covering same-scale, cross-scale, and cross-problem zero-shot tasks. \underline{\textbf{(1)}} Four widely used NP-hard public MILP benchmarks from \textbf{Ecole} \citep{prouvost2020ecole} with problem sizes in Table~\ref{tab:ecolesize}: 1) \textit{Set Covering (SC)}~\citep{chvatal1979greedy}; 2) \textit{Combinatorial Auction (CA)}~\citep{sandholm1999algorithm}; 3) \textit{Capacitated Facility Location (CFL)}~\citep{balinski1963integer}; and 4) \textit{Maximum Independent Set (MIS)}~\citep{Karp1972}. Each benchmark is provided at two scales, Medium and Large, based on the number of variables and constraints, for same-scale and cross-scale tasks; \underline{\textbf{(2)}} two benchmarks from \textbf{ML4CO} \citep{gasse2022machinelearningcombinatorialoptimization}: Item Placement (IP) and  Load Balancing (LB); \underline{\textbf{(3)}} very large public \textbf{MIPLIB}~\citep{miplib} benchmarks, each with more than $10{,}000+$ variables, for cross-problem zero-shot tasks.

%
\textbf{Baselines.}
We compare SRG with both exact MILP solvers and representative learning-based methods, comprising six baselines. \textit{Exact solvers:} \underline{\textbf{(1)}} \textit{SCIP}~\citep{bestuzheva2021scip}, a leading open-source MILP solver; \underline{\textbf{(2)}} \textit{Gurobi}~\citep{gurobi2023gurobi}, a state-of-the-art commercial MILP solver. For methods that rely on Gurobi as the downstream search engine, we use Gurobi~11.0 with 24 threads to ensure fair computational resource allocation. \textit{Learning-based baselines:} \underline{\textbf{(3)}} \textit{Predict-and-Search (PaS)}~\citep{Han23}, which predicts promising assignments and performs solver-based local search; \underline{\textbf{(4)}} \textit{Contrastive PaS (ConPaS)}~\citep{pmlr-v235-huang24f}, which improves PaS via contrastive learning. We also include two recent neural MILP solvers: \underline{\textbf{(5)}} \textit{Apollo-MILP}~\citep{liu2025apollomilpalternatingpredictioncorrectionneural}, which employs an alternating prediction-correction mechanism during search; and \underline{\textbf{(6)}} \textit{L2O-DiffILO}~\citep{geng2025differentiable}, a machine-learning-based MILP solver.


\textbf{Evaluation Metrics.}
We report two metrics in the main results: 
\underline{\textbf{(1)}} \textit{Obj}: The best primal bound objective value. \underline{\textbf{(2)}} $\mathrm{Gap}_{\mathrm{ref}}$ measures the primal-bound gap between each method and its baseline solver: $\mathrm{Obj}_{\mathrm{method}}-\mathrm{Obj}_{\mathrm{solver}}$ for minimization (e.g., SC, CFL) and $\mathrm{Obj}_{\mathrm{solver}}-\mathrm{Obj}_{\mathrm{method}}$ for maximization (e.g., MIS, CA), so lower is better and negative values indicate the method beats the baseline solver. We bold the best Obj and $\mathrm{Gap}_{\mathrm{ref}}$ among methods.

\textbf{Hyperparameters.} We report the number of variables and constraints in Tables~\ref{tab:ecolesize} to~\ref{tab:mpssize} in the Appx. For \textbf{SRG}, we list the number of Transformer blocks $L$, patch size $p$, trust region parameters $k_1,k_2,\Delta$ (as in prior work \citep{liu2025apollomilpalternatingpredictioncorrectionneural,Han23}), training steps $T_\text{Train}$, sampling steps $T_\text{sample}$, and guidance coefficients ($\gamma_c,\gamma_o$) and guidance strengths ($\rho_c,\rho_o$) for each benchmark scale in Table~\ref{tab:parameter} and Table~\ref{tab:sample-parameter}, both in the Appx.

\begin{wraptable}{r}{0.55\textwidth}
\vspace{0.1em}
\centering
\small
\setlength{\tabcolsep}{3pt}
\renewcommand{\arraystretch}{1.0}
\captionsetup{width=\linewidth,font=small}

\caption{\underline{\textbf{C}}ross \underline{\textbf{P}}roblem results with solving time limit of \underline{\textbf{6}}00s.}
\label{tab:gurobi_srg_obj}

\scalebox{0.7}{
\begin{tabular}{lcccc}
\toprule
\multirow{2}{*}{Problem}
& \multicolumn{2}{c}{Obj.}
& \multicolumn{2}{c}{Time(s)} \\
\cmidrule(lr){2-3} \cmidrule(lr){4-5}
& Gurobi & SRG (G) & Gurobi & SRG (G) \\
\midrule
IP($\downarrow$)
& $12.29 \pm 6.12$
& $\textbf{11.82} \pm 5.81$
& $600.16 \pm 0.05$
& $\textbf{600.30} \pm 0.07$ \\

LB($\downarrow$)
& $703.80 \pm 10.30$
& $\textbf{703.80} \pm 10.30$
& $600.59 \pm 0.18$
& $\textbf{589.36} \pm 23.27$ \\
\midrule
blp-ic98 ($\downarrow$)
& $4491.45$
& $\textbf{4491.45}$
& $19.0$
& $\textbf{17.7}$ \\

cmflsp50 \footnote{cmflsp50-24-8-8} ($\downarrow$)
& $55{,}789{,}390$
& $\textbf{55{,}789{,}390}$
& $\textbf{265.2}$
& $387.3$ \\

cbs-cta ($\downarrow$)
& $0.00$
& $\textbf{0.00}$
& $0.9$
& $\textbf{0.8}$ \\

tbfp-network ($\downarrow$)
& $24.16$
& $\textbf{24.16}$
& $135.2$
& $\textbf{123.0}$ \\
\bottomrule
\end{tabular}
}

\vspace{-1em}
\end{wraptable}

\subsection{Main Results}
\textbf{\underline{S}ame-\underline{S}cale Solving Effectiveness.}
We evaluate all methods on medium-scale benchmarks \citep{prouvost2020ecole}. As shown in Table~\ref{tab:medium-scale}, across different problem classes and downstream solvers, SRG achieves the best performance among learning-based methods in 7 out of 8 settings, while remaining competitive with SCIP and Gurobi. These results suggest that SRG can construct effective trust regions across diverse MILP problems.

\begin{wrapfigure}{r}{0.4\columnwidth}
\centering
\includegraphics[width=\linewidth]{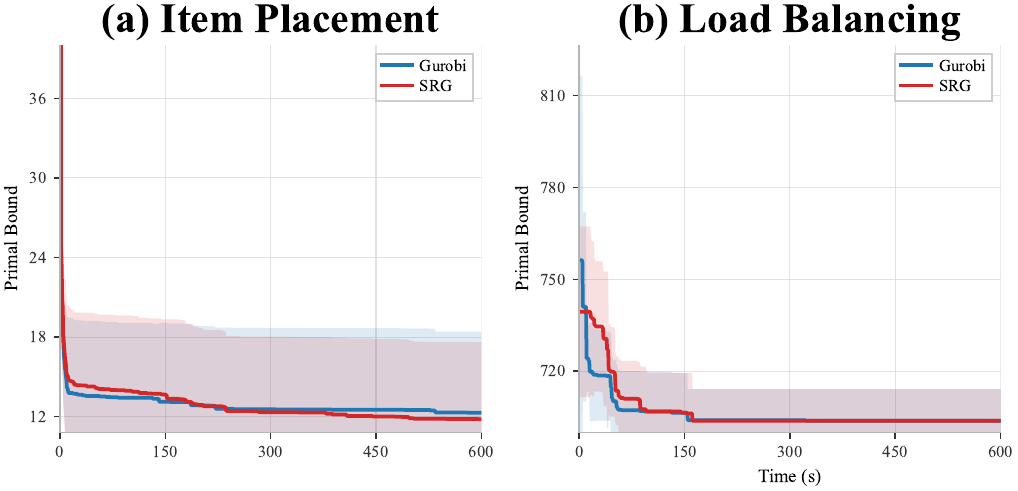}
\caption{\underline{\textbf{C}}ross \underline{\textbf{P}}roblem solving: Primal bound iteration trajectory for Item Placement and Load Balancing.}
\label{fig:crs}
\end{wrapfigure}

\textbf{\underline{\textbf{C}}ross-\underline{\textbf{S}}cale Generalization.}
We further evaluate the cross-scale transferability of SRG by directly applying the model pretrained on medium-scale benchmarks to large-scale test instances. 
As shown in Table~\ref{tab:large-scale}, compared with other learning-based baselines, SRG achieves the best or near-best objective values in most settings (7/8) under both SCIP and Gurobi. 
These results suggest that SRG captures transferable solution-generation patterns beyond the training scale and remains effective on larger MILP instances.

\textbf{\underline{\textbf{C}}ross-\underline{\textbf{P}}roblem Zero-Shot Results.}
We further evaluate the \textit{cross-scale} and \textit{cross-problem} generalization abilities of SRG on four unseen \textit{MIPLIB} instances~\citep{miplib} and two classical MILP benchmarks from \textit{ML4CO}~\citep{gasse2022machinelearningcombinatorialoptimization}, with sizes in Tables~\ref{tab:ml4cosize} and~\ref{tab:mpssize}.
We directly apply the SRG model pretrained on medium-scale MILP benchmarks without any fine-tuning. 
As shown in Table~\ref{tab:gurobi_srg_obj}, Fig.~\ref{fig:crs}, and Fig.~\ref{fig:crsmlplib} in the Appendix, SRG can transfer to unseen large-scale MILP instances and achieve competitive performance compared with the Gurobi baseline. 
In particular, SRG obtains a better objective on the Item Placement benchmark, while reducing solving time on Load Balancing and three MIPLIB instances. 
These results demonstrate the scalability and zero-shot transfer potential of SRG in cross-problem settings.



\subsection{Ablation Study}
\textbf{Relaxation Guidance.}
Removing the guidance terms in Eq.~(\ref{eq:modified_score_function_MAIN}) leads to consistent performance drops on Medium and Large CFL instances (Table~\ref{tab:ablation}), confirming that the relaxation-guided target provides essential feasibility and optimality signals for constructing high-quality search regions.

\textbf{Adaptation Rules for $\gamma_o,\gamma_c$.}
Without the adaptation rules, the $\epsilon$-space score-matching loss oscillates and rises to $\sim$140; with them, training stabilizes and converges to $\sim$1 (Fig.~\ref{fig:ablation_adapt}). This shows that the adaptation rules effectively normalize the guidance scale across timesteps and instances.

\subsection{Sensitivity Analysis}

\begin{wraptable}{l}{0.36\textwidth}
\vspace{-1.3em}
\centering
\caption{\underline{\textbf{A}}blation \underline{\textbf{S}}tudy on relaxation guidance on CFL (``M'' means Medium, and ``L'' means Large).}
\label{tab:ablation}
\resizebox{\linewidth}{!}{
\begin{tabular}{l|cc}
\hline
\multirow{1}{*}{Method} 
& \multicolumn{2}{c}{CFL(M, Gurobi)}  \\
& Obj($\downarrow$) & Time($\downarrow$) \\ \hline
\textbf{SRG} w/o Guide 
& $8878.85 \pm 441.06$  & $0.41 \pm 0.31$  \\
\textbf{SRG} \textbf{(Ours)} 
&  $\textbf{8858.28} \pm 446.19$ & $ 0.34 \pm 0.17$ \\ 
\hline
\multirow{1}{*}{Method}  
& \multicolumn{2}{c}{CFL (L, Gurobi)}\\ 
& Obj($\downarrow$) & Time($\downarrow$) \\ \hline
\textbf{SRG} w/o Guide  & $9688.42 \pm 502.69$ & $0.47 \pm 0.22$ \\
\textbf{SRG} \textbf{(Ours)}  & $ \textbf{9667.92} \pm 503.55 $ & $ 0.72 \pm 0.29 $  \\
\hline
\end{tabular}
}
\end{wraptable}
In this section, we conduct a sensitivity analysis on several key parameters in SRG with respect to both the training process (Fig.~\ref{fig:trainloss_ablationrho} in Appx.~\ref{app:ablation_add}) and the final solution quality (Table~\ref{tab:srg_guidance_ablation}).



\textbf{Impact of Guidance Strengths $\rho_o,\rho_c$.}
Table~\ref{tab:srg_guidance_ablation} shows that moderately increasing $\rho_c$ (from $0.3$ to $0.5$) improves performance, with further gains saturating at $1.0$, whereas increasing $\rho_o$ degrades performance, especially at $\rho_o=1.0$. This indicates that constraint guidance benefits from stronger weighting, while objective guidance must remain conservative to avoid dominating the diffusion signal.

\vspace{1em}
\begin{wraptable}{r}{0.5\textwidth}
\centering  
\caption{\underline{\textbf{S}}ensitivity \underline{\textbf{A}}nalysis results for different components in SRG on Medium CA. We report averaged metrics on \underline{\textbf{50}} instances. Except for the training-step experiment, all experiments use $T=50$.}
\label{tab:srg_guidance_ablation}

\resizebox{\linewidth}{!}{
\begin{tabular}{lcccc}
\toprule
Setting & $L, p$ & $\rho_o, \rho_c$ & $\gamma_o, \gamma_c$ & Obj.($\uparrow$) \\
\midrule
$\rho_c$ & $(12, 4)$ & $(0.3, 0.3)$ & $(2, 5)$ & $62729.85 \pm 1081.66$  \\
$\rho_c$ & $(12, 4)$ & $(0.3, 0.5)$ & $(2, 5)$ & $63054.66 \pm 1183.44$ \\
$\rho_c$ & $(12, 4)$ & $(0.3, 1.0)$ & $(2, 5)$ & $62854.61 \pm 1317.24$  \\ 
\midrule
$\rho_o$ & $(12, 4)$ & $(0.5, 0.3)$ & $(2, 5)$ & $62584.80 \pm 1051.49$  \\
$\rho_o$ & $(12, 4)$ & $(1.0, 0.3)$ & $(2, 5)$ & $62754.48 \pm 1161.98$  \\ 
\midrule
$\gamma_o,\gamma_c$ & $(12, 4)$ & $(0.3, 0.3)$ & $(2, 7)$ & $62770.34 \pm 1127.15$  \\
$\gamma_o,\gamma_c$ & $(12, 4)$ & $(0.3, 0.3)$ & $(5, 2)$ & $61657.34 \pm 1177.05$  \\
\midrule
$p$ & $(12, 6)$ & $(0.3, 0.3)$ & $(2, 5)$ & $62553.54 \pm 1194.54$  \\
$p$ & $(12, 8)$ & $(0.3, 0.3)$ & $(2, 5)$ & $62809.37 \pm 1173.52$  \\ 
\midrule
$L$ & $(8, 10)$ & $(0.3, 0.3)$ & $(2, 5)$ & $62869.37 \pm 1167.13$  \\
$L$ & $(10, 10)$ & $(0.3, 0.3)$ & $(2, 5)$ & $62627.60 \pm 1129.14$  \\ 
\midrule
$T_\text{Train}=30$ & $(12, 4)$ & $(0.3, 0.3)$ & $(2, 5)$ & $62875.54 \pm 1214.48$  \\
$T_\text{Train}=20$ & $(12, 4)$ & $(0.3, 0.3)$ & $(2, 5)$ & $62773.86 \pm 1189.16$   \\
\bottomrule
\end{tabular}
}
\vspace{-1em}
\end{wraptable}

\textbf{Impact of Guidance Weights $\gamma_o,\gamma_c$.}
SRG is stable under moderate changes in constraint-guidance weight: $(\gamma_o,\gamma_c)=(2,7)$ matches the default $(2,5)$. However, $(5,2)$ significantly degrades performance, confirming that feasibility guidance should dominate objective guidance.

\textbf{Impact of Training Steps $T_{\text{train}}$.}
Reducing $T_{\text{train}}$ from $50$ to $30$ or $20$ leads to a noticeable performance drop. This indicates that a sufficient number of training diffusion steps is important for learning an accurate guided score function. The similar results for $T_{\text{train}}=20$ and $30$ further suggests that too few training steps may place the model in a lower-performance regime.

\textbf{Impact of Transformer Blocks $L$ and Patch Size $p$.}
Varying the patch size $p$ or the number of transformer blocks $L$ only causes mild performance changes. The results under $p=6,8$ and $(L,p)=(8,10),(10,10)$ remain close to the default configuration, showing that SRG is not highly sensitive to moderate architectural variations.

Overall, SRG is robust to moderate architectural and guidance-parameter changes. Fig.~\ref{fig:trainloss_ablationrho} and Table~\ref{tab:srg_guidance_ablation} further suggest that balancing objective and constraint guidance, together with a sufficiently large $T_{\text{train}}$, is most important for final solution quality.


\begin{wrapfigure}{l}{0.35\textwidth}
\centering
\includegraphics[width=1\linewidth]{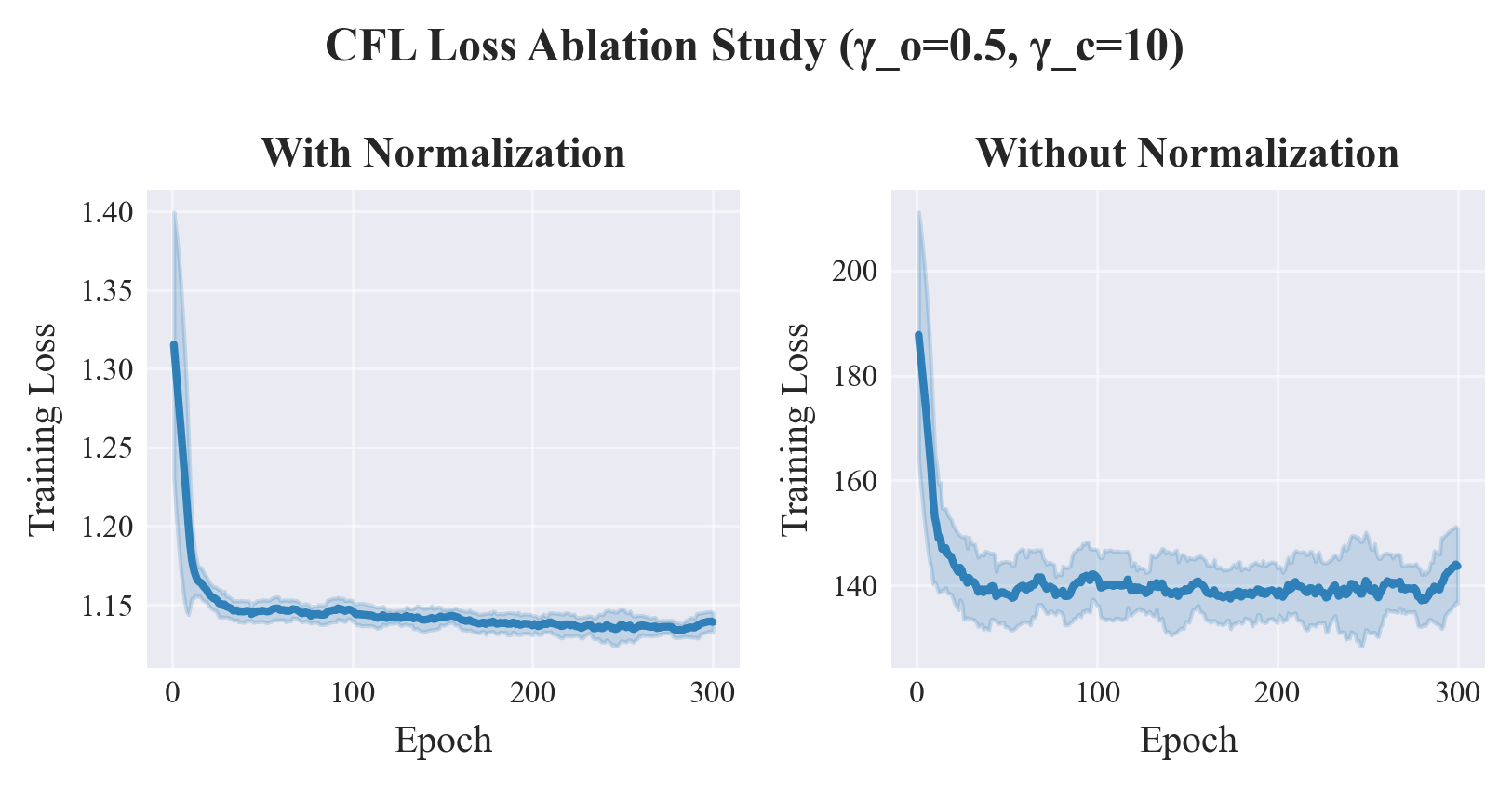}
\caption{\underline{\textbf{A}}blation \underline{\textbf{R}}esults for adaptation rules on $\gamma_o$ and $\gamma_c$.}
\label{fig:ablation_adapt}
\end{wrapfigure}

\subsection{Generation Analysis}
\textbf{Sample Diversity.} Despite each training instance providing only a single optimal label, our sampling strategy produces \textit{diverse} candidates (Figs.~\ref{fig:qica}--\ref{fig:qicfl} in Appendix~\ref{app:QUANT}), with diversity arising from the stochasticity of the reverse-time SDE and the directional shaping of the relaxation guidance. This expands the coverage of high-quality subproblems and improves robustness of the downstream search.

\textbf{Progressive Reduction.} Sampling trajectories on medium-scale CA and CFL (Figs.~\ref{fig:penaltyca}--\ref{fig:penaltycfl} in Appendix~\ref{app:QUANT}) show a progressive decrease in both objective and constraint-violation terms during generation, confirming that the learned model steers the solution distribution toward feasible, high-quality regions.

\textbf{Efficiency Analysis.} Under parallel sampling, our DDIM/DDPM sampler takes at most $0.53$ seconds on average across benchmarks (Table~\ref{tab:time_comparison}, Appendix~\ref{app:eff}), demonstrating the practicality of SRG for real-world deployment.

\section{Conclusion}
We present \textbf{S}RG, a \underline{\textbf{S}}core-based \underline{\textbf{R}}elaxation-guided \underline{\textbf{G}}enerative framework for MILP solving. By incorporating feasibility and optimality signals directly into score learning, SRG overcomes the single-point prediction limitation of existing predict-and-search methods. The learned model generates diverse, high-quality candidates by direct sampling, which are then used to construct compact trust-region subproblems for downstream solvers. Across multiple publicly available MILP benchmarks, SRG achieves competitive or improved solution quality compared with existing machine learning baselines and solver combinations. Furthermore, SRG demonstrates strong zero-shot transferability across cross-scale and cross-problem settings on unseen large-scale public benchmarks.

\paragraph{Limitations.} 
Due to computational constraints, the upper bound of SRG's capability is not fully explored in this work.


\bibliography{example_paper}
\bibliographystyle{unsrt}

\newpage
\appendix
\onecolumn

\section{Propositions}
\subsection{Proof of Proposition~\ref{prop:equivalence_MAIN}}
\label{app:lsde}

\begin{proof}
We prove the equivalence by showing that both optimization problems have the same objective function up to an additive constant. Define the target distribution:
\begin{equation}
\tilde{p}(x|\mathbf{g}) = \frac{1}{Z} p_{\text{data}}(x|\mathbf{g}) \exp(-\gamma_o \mathcal{O}(x) - \gamma_c \mathcal{P}(x))
\end{equation}
where $Z = \int p_{\text{data}}(x|\mathbf{g}) \exp(-\gamma_o \mathcal{O}(x) - \gamma_c \mathcal{P}(x)) dx$ is the normalization constant.

For the original optimization problem:
\begin{equation}
f_q = D_{\text{KL}}(q(x) \| p_{\text{data}}(x|\mathbf{g})) + \gamma_o \mathbb{E}_{x \sim q}[\mathcal{O}(x)] + \gamma_c \mathbb{E}_{x \sim q}[\mathcal{P}(x)]
\end{equation}

Expanding the KL divergence:
\begin{equation}
\begin{aligned}
f_q &= \mathbb{E}_{x \sim q}\left[\log \frac{q(x)}{p_{\text{data}}(x|\mathbf{g})}\right] + \gamma_o \mathbb{E}_{x \sim q}[\mathcal{O}(x)] + \gamma_c \mathbb{E}_{x \sim q}[\mathcal{P}(x)] \\
&= \mathbb{E}_{x \sim q}\left[\log q(x) - \log p_{\text{data}}(x|\mathbf{g}) + \gamma_o \mathcal{O}(x) + \gamma_c \mathcal{P}(x)\right] \\
&= \mathbb{E}_{x \sim q}\left[\log q(x) - \log p_{\text{data}}(x|\mathbf{g}) - \log \exp(-\gamma_o \mathcal{O}(x) - \gamma_c \mathcal{P}(x))\right] \\
&= \mathbb{E}_{x \sim q}\left[\log q(x) - \log(p_{\text{data}}(x|\mathbf{g}) \exp(-\gamma_o \mathcal{O}(x) - \gamma_c \mathcal{P}(x)))\right] \\
&= \mathbb{E}_{x \sim q}\left[\log q(x) - \log(Z \cdot \tilde{p}(x|\mathbf{g}))\right] \\
&= \mathbb{E}_{x \sim q}\left[\log \frac{q(x)}{\tilde{p}(x|\mathbf{g})}\right] - \log Z \\
&= D_{\text{KL}}(q(x) \| \tilde{p}(x|\mathbf{g})) - \log Z
\end{aligned}
\end{equation}
Therefore:
\begin{equation}
\boxed{\min_q f_q = \min_q D_{\text{KL}}(q(x) \| \tilde{p}(x|\mathbf{g})) - \log Z}
\end{equation}

Since $\log Z$ is constant with respect to $q$, the two optimization problems are equivalent.
\end{proof}

\section{Proof of Proposition~\ref{prop:reverse_sde}}
\label{app:proof}

\subsection{Setup and Assumptions}
\label{app:setup}

The refined target distribution is
\begin{equation}
    \tilde p_0(x \mid \mathbf{g}) := \frac{1}{Z}\, p_{\text{data}}(x \mid \mathbf{g})\, w(x), 
    \qquad Z := \int_{\mathbb{R}^n} p_{\text{data}}(x \mid \mathbf{g})\, w(x)\, dx.
\end{equation}

We make the following assumptions:
\begin{itemize}
    \item[\textbf{(A1)}] $p_{\text{data}}(\cdot \mid \mathbf{g}) > 0$ on $\mathbb{R}^n$ 
    with $\int p_{\text{data}}\, dx = 1$, and $\mathbb{E}_{x \sim p_{\text{data}}}[\|x\|^2] < \infty$.
    \item[\textbf{(A2)}] Since $\mathcal{O}, \mathcal{P} \ge 0$, we have 
    $w(x) \in (0, 1]$ for all $x \in \mathbb{R}^n$. Therefore $0 < Z \le 1$, 
    and $\tilde p_0$ is a well-defined probability density.
    \item[\textbf{(A3)}] The functions $\mathcal{O}$ and $\mathcal{P}$ are 
    Lipschitz continuous on $\mathbb{R}^n$ with constants 
    $L_\mathcal{O} = \|c\|_\infty$ and $L_\mathcal{P} = \lambda\|A\|_\infty$ 
    respectively. Hence $w$ is Lipschitz with constant $L_w$, and 
    $\nabla\mathcal{O}, \nabla\mathcal{P}$ exist almost everywhere; 
    at non-smooth points (L1 kinks and constraint boundaries), they are 
    interpreted as Clarke subgradients.
    \item[\textbf{(A4)}] The forward SDE has continuous noise schedule 
    $\beta : [0, T] \to \mathbb{R}_{>0}$ with 
    $\beta_{\min} \le \beta(t) \le \beta_{\max}$ for some constants 
    $0 < \beta_{\min} \le \beta_{\max} < \infty$.
\end{itemize}

The Gaussian transition kernel of the VP-SDE is
\begin{equation}
    q_t(x_t \mid x_0) = \mathcal{N}\big(x_t;\ \sqrt{\bar\alpha_t}\, x_0,\ 
    (1-\bar\alpha_t) I\big), \qquad 
    \bar\alpha_t = \exp\!\left(-\int_0^t \beta(s)\, ds\right) \in (0, 1].
\end{equation}
We write $\sigma_t^2 := 1 - \bar\alpha_t$ throughout. We denote by 
$\tilde p_t(\cdot \mid \mathbf{g})$ the marginal of $x_t$ starting 
from $x_0 \sim \tilde p_0$, and by $p_t(\cdot \mid \mathbf{g})$ the 
marginal starting from $x_0 \sim p_{\text{data}}$.

\subsection{Proof of Proposition~\ref{prop:reverse_sde}}
\label{app:proof_reverse_sde}

We prove the two claims separately: 
(i) the reverse-time SDE~\eqref{eq:reverse_sde1} preserves marginals; 
(ii) the score admits the exact decomposition~\eqref{eq:score_exact}.

\paragraph{Part (i): Reverse-time SDE.}

\begin{lemma}[Regularity of $\tilde p_t$]
\label{lem:regularity}
Under (A1)--(A4), for every $t \in (0, T]$:
(a) $\tilde p_t(x \mid \mathbf{g}) > 0$ on $\mathbb{R}^n$;
(b) $\tilde p_t(\cdot \mid \mathbf{g}) \in C^\infty(\mathbb{R}^n)$;
(c) $\nabla_x\log\tilde p_t(x \mid \mathbf{g})$ is well-defined and continuous.
\end{lemma}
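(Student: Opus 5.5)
The plan is to write $\tilde p_t$ explicitly as a Gaussian convolution and read off all three properties from that representation. For $t\in(0,T]$ we have $\bar\alpha_t<1$, so $\sigma_t^2=1-\bar\alpha_t>0$ by (A4), since $\beta\ge\beta_{\min}>0$. The marginal is
\begin{equation*}
\tilde p_t(x\mid\mathbf{g})=\int_{\mathbb{R}^n}\tilde p_0(x_0\mid\mathbf{g})\,\varphi_t(x-\sqrt{\bar\alpha_t}\,x_0)\,dx_0,
\qquad
\varphi_t(y)=(2\pi\sigma_t^2)^{-n/2}\exp\!\big(-\|y\|^2/(2\sigma_t^2)\big).
\end{equation*}
By (A2), $\tilde p_0=p_{\text{data}}w/Z$ is a probability density. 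It is strictly positive by (A1), because $w>0$.

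For part (a), the integrand is the product of the positive function $\tilde p_0$ and the everywhere-positive Gaussian $\varphi_t$. A positive integrand on a set of positive measure gives $\tilde p_t(x)>0$ for every $x$.

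For part (b), I would differentiate under the integral sign. Every derivative $\partial^\alpha_x\varphi_t(x-\sqrt{\bar\alpha_t}x_0)$ has the form $P_\alpha(x-\sqrt{\bar\alpha_t}x_0)\,\varphi_t(x-\sqrt{\bar\alpha_t}x_0)$ with $P_\alpha$ a polynomial. Since $|P_\alpha(y)|\varphi_t(y)$ is bounded on $\mathbb{R}^n$, it is bounded by a constant $C_\alpha$ that does not depend on $x$ or $x_0$. The integrable function $C_\alpha\tilde p_0(x_0)$ is then a dominating function, uniform in $x$. Dominated convergence, applied inductively on $|\alpha|$, shows that $\tilde p_t$ is $C^\infty$ and that its derivatives are obtained by differentiating inside the integral. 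This step needs only that $\tilde p_0$ is integrable; the finite second moment in (A1) is not required here.

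For part (c), combine (a) and (b): $\log\tilde p_t$ is the composition of $\log$ with a smooth positive function, so it is $C^\infty$. Hence $\nabla_x\log\tilde p_t=\nabla_x\tilde p_t/\tilde p_t$ is well defined and continuous.

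As a by-product, the formula
\begin{equation*}
\nabla_x\log\tilde p_t(x)=-\frac{x-\sqrt{\bar\alpha_t}\,\mathbb{E}[x_0\mid x_t=x]}{\sigma_t^2}
\end{equation*}
is Tweedie's identity. The second-moment assumption in (A1), together with the Lipschitz property of $w$ in (A3), gives at most linear growth of this score on compacts. That growth bound is what Part (i) later needs for the Anderson reverse-time SDE.

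The main obstacle is the justification of differentiation under the integral. It requires a dominating function that is uniform over a neighbourhood of $x$, and the bound $\sup_y|P_\alpha(y)|\varphi_t(y)<\infty$ provides it. This bound depends on $\sigma_t>0$, which is why the lemma is restricted to $t>0$. At $t=0$, $\tilde p_0$ inherits only Lipschitz regularity from $w$, so smoothness can fail there.
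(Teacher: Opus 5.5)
Your proof is correct and follows the paper's argument step for step. Like the paper, you write $\tilde p_t$ as a Gaussian convolution of the strictly positive density $\tilde p_0$, read off positivity for (a), differentiate under the integral by dominated convergence for (b), and take $\nabla_x\tilde p_t/\tilde p_t$ for (c); your dominating function $C_\alpha\,\tilde p_0(x_0)$, which uses bounded kernel derivatives times an integrable density and handles the $\sqrt{\bar\alpha_t}$ scaling explicitly, is a slightly more careful version of the paper's one-line appeal to integrable kernel derivatives, and the Tweedie/growth aside is not needed for the lemma.
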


\begin{proof}
Since $\tilde p_t = q_t * \tilde p_0$ is a Gaussian convolution with 
$q_t > 0$ and $\tilde p_0 > 0$ (by (A1)--(A2)), we have $\tilde p_t > 0$, 
proving (a). The Gaussian kernel is $C^\infty$ with all derivatives 
integrable, so by dominated convergence $\tilde p_t \in C^\infty$, 
proving (b). Combining (a) and (b), $\nabla_x\log\tilde p_t = \nabla_x\tilde p_t / \tilde p_t$ 
is well-defined and continuous, proving (c).
\end{proof}

By Anderson's theorem~\citep{anderson1982reverse}, any It\^o SDE 
$dx_t = f(x_t, t)\, dt + g(t)\, dW_t$ with smooth marginal 
$\tilde p_t > 0$ admits the time-reversed process
$dx_t = [f(x_t, t) - g(t)^2 \nabla_{x_t}\log \tilde p_t(x_t)]\,dt + g(t)\,d\bar W_t$, 
preserving marginals at every $t$. Specializing to the VP-SDE with 
$f(x, t) = -\tfrac{1}{2}\beta(t)\, x$ and $g(t) = \sqrt{\beta(t)}$ 
yields Eq.~\eqref{eq:reverse_sde1}; Lemma~\ref{lem:regularity} 
ensures Anderson's regularity conditions are met. \hfill$\square$

\paragraph{Part (ii): Score Decomposition.}

Starting from $x_0 \sim \tilde p_0 = \frac{1}{Z}\, p_{\text{data}}\, w$, 
the forward marginal at time $t$ is
\begin{equation}
    \tilde p_t(x_t \mid \mathbf{g}) 
    = \frac{1}{Z}\int q_t(x_t \mid x_0)\, p_{\text{data}}(x_0 \mid \mathbf{g})\, w(x_0)\, dx_0.
\label{eq:appA1}
\end{equation}
Define the unconditional marginal 
$p_t(x_t \mid \mathbf{g}) := \int q_t(x_t \mid x_0)\, p_{\text{data}}(x_0 \mid \mathbf{g})\, dx_0$ 
and the corresponding posterior 
$p(x_0 \mid x_t, \mathbf{g}) = q_t(x_t \mid x_0)\, p_{\text{data}}(x_0 \mid \mathbf{g}) / p_t(x_t \mid \mathbf{g})$ 
(well-defined since $p_t > 0$ by Lemma~\ref{lem:regularity}). 
Multiplying and dividing the integrand of~\eqref{eq:appA1} by 
$p_t(x_t \mid \mathbf{g})$ rewrites it as
\begin{equation}
    \tilde p_t(x_t \mid \mathbf{g}) 
    = \frac{1}{Z}\, p_t(x_t \mid \mathbf{g})\, 
    \mathbb{E}_{x_0 \sim p(x_0 \mid x_t, \mathbf{g})}[w(x_0)].
\label{eq:appA3}
\end{equation}
Taking the logarithm and differentiating with respect to $x_t$ 
(noting $\nabla_{x_t}\log Z = 0$ since $Z$ is independent of $x_t$):
\begin{equation}
    \nabla_{x_t}\log\tilde p_t(x_t \mid \mathbf{g}) 
    = \nabla_{x_t}\log p_t(x_t \mid \mathbf{g}) 
    + \nabla_{x_t}\log\mathbb{E}_{x_0 \sim p(x_0 \mid x_t, \mathbf{g})}[w(x_0)].
\label{eq:appA5}
\end{equation}
The interchange of differentiation and integration is justified by 
the Leibniz rule: the integrand is differentiable in $x_t$ with 
$w \le 1$ providing an integrable dominating bound. 
Equation~\eqref{eq:appA5} is precisely~\eqref{eq:score_exact}, 
completing the proof of Proposition~\ref{prop:reverse_sde}. \hfill$\square$

\section{Details for our 2D Toy Experiment}
\label{app:toyexperiments}
In this toy experiment, we use 1000 2D $[0,1]^2$ LP instances for training, with both training and sampling set to 50 steps. We use a convolutional generative network for learning the guided score. The mathematical formulation of the test instance is given in Eq.~\ref{eq:toy}:
\begin{equation}
\begin{aligned}
\min_{x_1, x_2} \quad & 1.1652 x_1 + 1.3948 x_2 \\
\text{subject to} \quad
& -0.9927 x_1 - 0.1208 x_2 \geq -0.8666 \\
& -0.9201 x_1 - 0.3918 x_2 \geq -1.0237 \\
& -0.5800 x_1 - 0.8146 x_2 \geq -1.0415 \\
& -0.0648 x_1 - 0.9979 x_2 \geq -0.8316 \\
&  0.5066 x_1 - 0.8622 x_2 \geq -0.3946 \\
&  0.9225 x_1 - 0.3860 x_2 \geq  0.0036 \\
&  0.9968 x_1 + 0.0797 x_2 \geq  0.1460 \\
&  0.8181 x_1 + 0.5750 x_2 \geq  0.4351 \\
&  0.5763 x_1 + 0.8172 x_2 \geq  0.3879 \\
& -0.0203 x_1 + 0.9998 x_2 \geq  0.1130 \\
& -0.5559 x_1 + 0.8313 x_2 \geq -0.1361 \\
& -0.8809 x_1 + 0.4733 x_2 \geq -0.4563 \\
& 0 \leq x_1 \leq 1 \\
& 0 \leq x_2 \leq 1
\label{eq:toy}
\end{aligned}
\end{equation}
Notably, we use a 4-layer ResNet as the denoising network with 2D conditional generation for simplified training. The generation results are obtained with $\gamma_o=0.1$, $\gamma_c=0.3$, and 50 sampling steps. Since the 2-D optimization problem is correspondingly small, the generative network employs the full score relaxation function, implemented with a 4-layer ResNet and simplified condition embedding. The mathematical formulation for our LP instance construction is in Figure~\ref{fig:obj}.
\begin{figure*}
\centering
\includegraphics[width=1\textwidth]{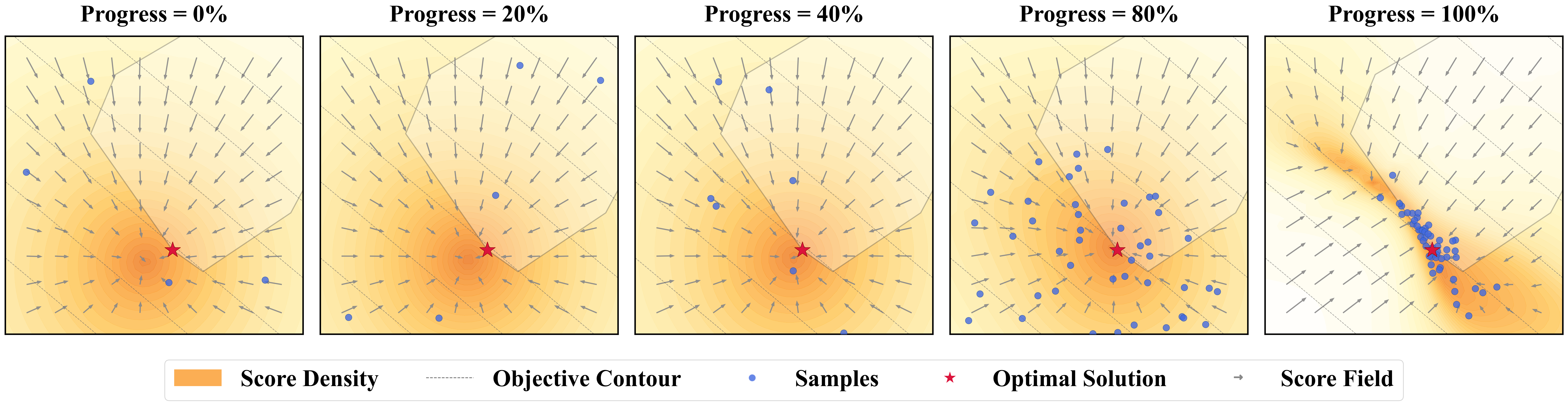}
\caption{\textbf{Toy experiments} with a relaxation-guided score}
\label{fig:lagtoy}
\end{figure*}

\begin{figure*}
\centering
\includegraphics[width=0.4\textwidth]{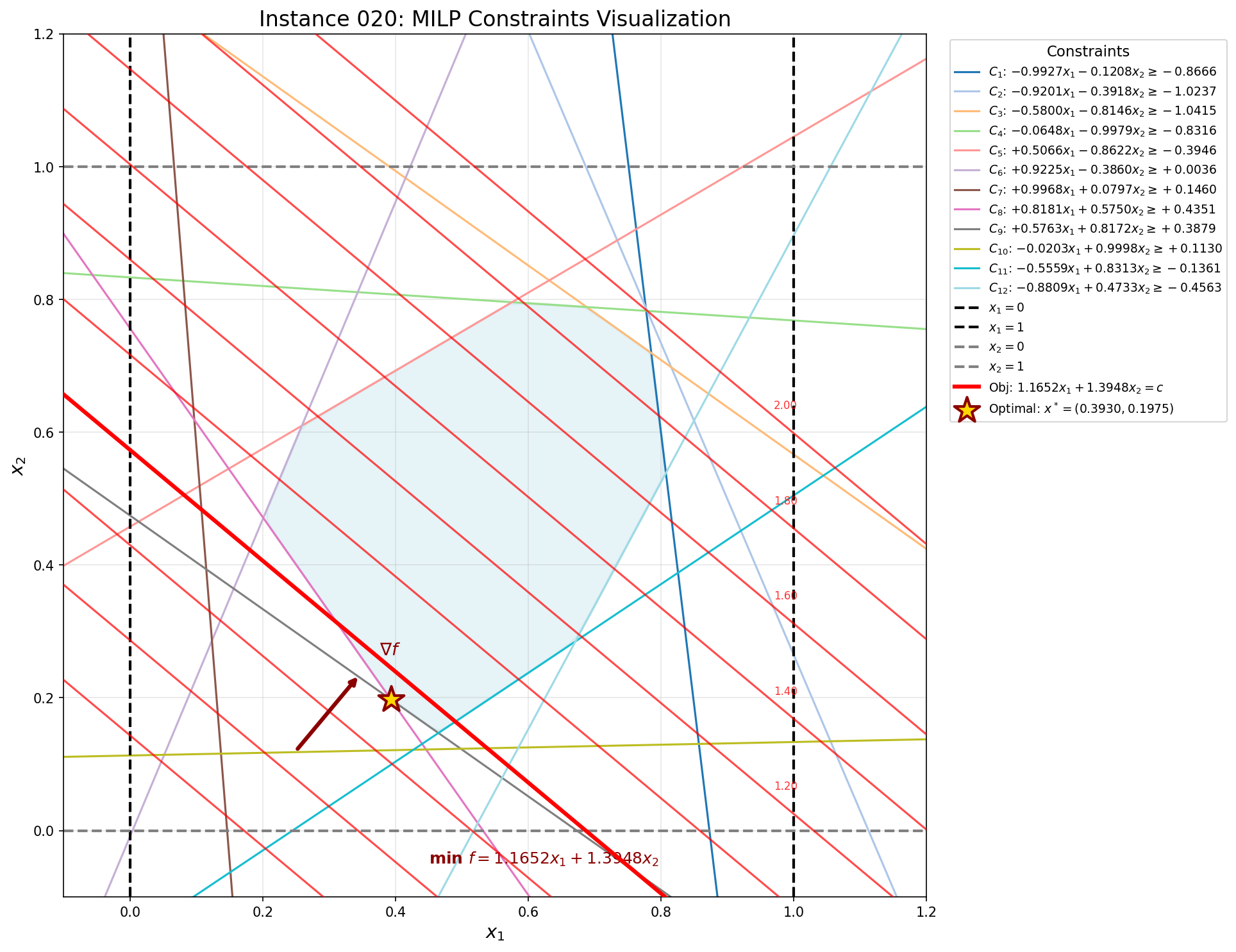}
\caption{\textbf{Toy 2D LP} instance with objective contours and constraints}
\label{fig:obj}
\end{figure*}

\section{MILP Benchmark Sizes}
\label{app:data}
In this section, we report the problem sizes for our MILP benchmarks, including four very large-scale \textbf{MIPLIB} benchmarks summarized in Table~\ref{tab:mpssize}, two classical MILP problems from \textbf{ML4CO}, and four problems at two scales from \textbf{ECOLE}.
\begin{table}[htbp]
\centering
\caption{Problem sizes for each benchmark (variables, constraints). All benchmark ground truth solutions are obtained by Gurobi within 600 seconds.}
\label{tab:ecolesize}
\vspace{6pt}
\scalebox{0.85}{
\begin{tabular}{lcccc cc}
\toprule
TYPE & \textbf{SC} & \textbf{MIS} & \textbf{CA} & \textbf{CFL} & Time Limit & Num. \\ \midrule
ECOLE (Middle) & (1000, 1000) & (1000, $\sim$ 4213) & (2000, $\sim$ 1375) & (1550, 1581)  & 100s & 50 \\
ECOLE (Large) & (2000, 1000) & (2000, $\sim$ 8478) & (2000, $\sim$ 2345) & (2550, 2601)  & 600s & 20 \\
\bottomrule
\end{tabular}}
\end{table}

\begin{table}[htbp]
\centering
\caption{Problem sizes for public benchmarks from ML4CO (variables, constraints).}
\label{tab:ml4cosize}
\vspace{6pt}
\scalebox{0.85}{
\begin{tabular}{lcccc cc}
\toprule
TYPE & \textbf{Item Placement} & \textbf{Load balancing} & Time Limit & Num. \\
\midrule
ML4CO & (1,083, 195) & (61,000, 64,363) & 600s & 5 \\
\bottomrule
\end{tabular}}
\end{table}

\begin{table}[htbp]
\centering
\caption{Problem sizes for public benchmarks from MIPLIB (variables, constraints).}
\label{tab:mpssize}
\vspace{6pt}
\scalebox{0.85}{
\begin{tabular}{lcccc}
\toprule
Size & \textbf{blp-ic98} & \textbf{cmflsp50-24-8-8} & \textbf{cbs-cta} & \textbf{tbfp-network} \\
\midrule
Variables & 13,640 & 16,392 & 24,793 & 72,747 \\ 
Discrete Variables & 13,550 & 1,392 & 2,467 & 72,747 \\
Continuous Variables & 90 & 15,000 & 22,326 & 0 \\ 
Constraints & 717 & 3,520 & 10,112 & 2,436 \\ 
\bottomrule
\end{tabular}}
\end{table}


\begin{wraptable}{r}{0.5\textwidth}
\vspace{-1em}
\centering
\scalebox{0.6}{
\begin{tabular}{l|cccc}
\hline
 & SC & IS & CA & CFL \\
\hline
Generate Time (M) 
& $0.46 \pm 0.03$ 
& $0.48 \pm 0.06$ 
& $0.45 \pm 0.03$ 
& $0.47 \pm 0.03$ \\
Generate Time (L) 
& $0.53 \pm 0.05$ 
& $0.48 \pm 0.04$ 
& $0.48 \pm 0.04$ 
& $0.47 \pm 0.05$ \\
\hline
\end{tabular}}
\caption{Generation time of SRG on medium and large-scale instances.}
\label{tab:time_comparison}
\vspace{-1em}
\end{wraptable}

\section{Implementation Details}
\label{app:impl}
All experiments are conducted on NVIDIA GPUs, including RTX A4000 (16GB), Tesla V100 (32GB), and RTX 3090 (24GB).

\subsection{Training Details and Results}
In this section, we report the training dynamics for Medium and Large benchmarks. The $\gamma_o,\gamma_c,T_\text{train},h,w,L,p,\rho_o,\rho_c$ used for training on four benchmarks are summarized in Table \ref{tab:parameter}. We use \underline{\textbf{5}}00 instances as the training set and train for \underline{\textbf{3}}00 epochs for all benchmarks. We visualize the training loss in Figure~\ref{fig:training_loss}.

\begin{figure*}
\centering
\includegraphics[width=1\textwidth]{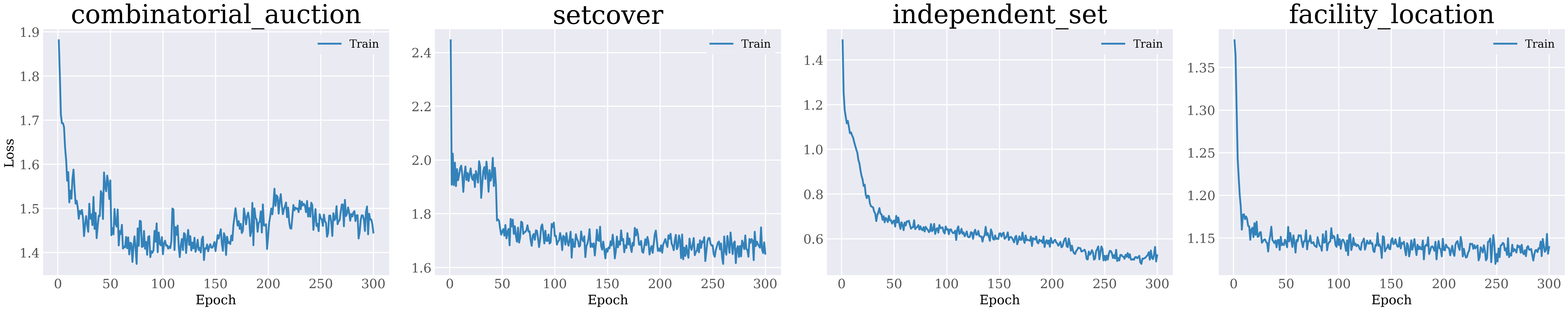}
\caption{Training curves on four benchmarks.}
\label{fig:training_loss}
\end{figure*}

\begin{table}[htbp]
\centering
\caption{Training parameters for main results.}
\label{tab:parameter}
\vspace{6pt}
\scalebox{0.9}{
\begin{tabular}{lcccc}
\toprule
Medium & SC & CA & IS & CFL \\
\midrule
$(h,w)$ & (25,40) & (40,50) & (25,40) & (31,50)\\
$T_\text{Train}$ & 50 & 50 & 20 & 20   \\
$L$ & 12 & 12 & 12 & 12   \\
$p$ & 10 & 4 & 4 & 4   \\
$(\gamma_o, \gamma_c)$ & (5, 10)  & (2, 5)  & (3,5)  & (0.05,10) \\
$(\rho_o, \rho_c)$ & (0.3,0.3) &(0.3,0.5) & (0.3,0.3)& (0.3,0.3)\\
\bottomrule
\end{tabular}}
\end{table}

\subsection{Sampling Details}
In this section, we report the sampling parameters in Table \ref{tab:sample-parameter} for ($k_1,k_2,\Delta,T_\text{sample}$) in our score-based model and the search phase (following \citep{Han23}). 

\begin{table}[htbp]                                                                  
\centering                                                                            \caption{Sampling parameters for main results, split by solver.}                         
\label{tab:sample-parameter}                                                             
\vspace{6pt}                                                                             
\resizebox{\textwidth}{!}{%
\begin{tabular}{l|cccc|cccc|cccc|cccc}                                                   
\toprule                                                                                 
& \multicolumn{8}{c|}{Medium} & \multicolumn{8}{c}{Large} \\                             
\cmidrule(lr){2-9}\cmidrule(lr){10-17}                                                   
& \multicolumn{4}{c|}{Gurobi} & \multicolumn{4}{c|}{SCIP}                                
& \multicolumn{4}{c|}{Gurobi} & \multicolumn{4}{c}{SCIP} \\                              
& SC & CA & IS & CFL & SC & CA & IS & CFL                                                
& SC & CA & IS & CFL & SC & CA & IS & CFL \\                                             
\midrule                                                                                 
$T_{\text{sample}}$ & 20  & 20  & 20  & 20  & 20  & 20  & 20  & 20  & 20 & 20  & 20  & 20
& 20 & 20  & 20  & 20  \\                                                              
$k_1$               & 20  & 150 & 500 & 100 & 20  & 150 & 200 & 100 & 50 & 50  & 150 &   
150 & 50 & 150 & 150 & 150 \\                                                            
$k_2$               & 400 & 800 & 50  & 200 & 400 & 800 & 400 & 200 & 300& 200 & 800 &   
300 & 300& 800 & 800 & 300 \\                                                            
$\Delta$            & 300 & 500 & 400 & 150 & 300 & 500 & 400 & 150 & 200& 200 & 500 &   
200 & 200& 500 & 500 & 200 \\                                                            
$N_{\text{sample}}$ & 8   & 8   & 8   & 8   & 8   & 8   & 8   & 8   & 8  & 8   & 8   & 8 
& 8  & 8   & 8   & 8   \\                                                              
\bottomrule                                                                              
\end{tabular}}                                                                           
\end{table}

\subsection{\underline{E}fficiency \underline{A}nalysis}
\label{app:eff}
In this section, we report generation time for our score-based models. We report the runtime across four benchmarks at two different scales, as shown in Table~\ref{tab:time_comparison}.

\section{Additional Results}
\label{appx:add}
This section supplements the experimental results in the main text. We report \underline{\textbf{P}}rimal \underline{\textbf{Bound}} iterations for cross-problem solving tasks (in Appx.~\ref{app:mipresults}), \underline{D}etailed \underline{P}erformance \underline{I}mpact (in Appx.~\ref{app:ablation_add}), \underline{Q}uantitative \underline{G}eneration \underline{P}erformance (in Appx.~\ref{app:QUANT}).

\subsection{\underline{\textbf{P}}rimal \underline{\textbf{Bound}} iterations for Cross Problem Solving Tasks}
\label{app:mipresults}
In this section, we show the detailed primal bound iterations for MIPLIB problems in Figure~\ref{fig:crs} and Figure~\ref{fig:crsmlplib}.

\begin{figure*}
\centering
\includegraphics[width=1\textwidth]{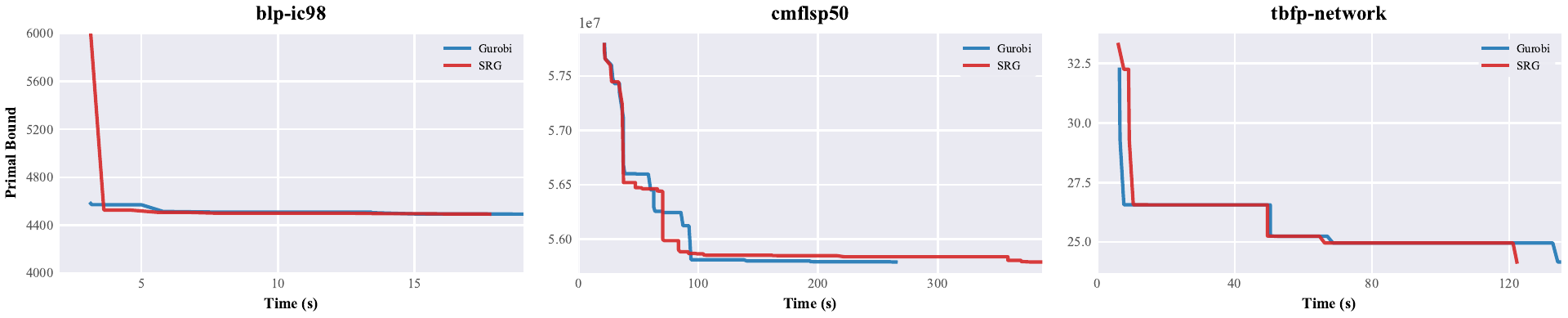}
\caption{Primal bound iterations on MIPLIB instances.}
\label{fig:crsmlplib}
\end{figure*}

\subsection{\underline{Q}uantitative \underline{G}eneration \underline{P}erformance}
\label{app:QUANT}
In this section, we visualize the sampling/generation process on medium-scale combinatorial auction (CA) and CFL problems. Specifically, we visualize the objective value (after projection onto the mixed-integer constraints), the average constraint violation, and the sparsity of the solution along the sampling trajectory. The quantitative metrics along the sampling process are reported in Fig. \ref{fig:penaltycfl} and Fig. \ref{fig:penaltyca}; the generation visualizations are shown in Fig. \ref{fig:qica} and Fig. \ref{fig:qicfl}.

\begin{figure*}
\centering
\includegraphics[width=1\textwidth]{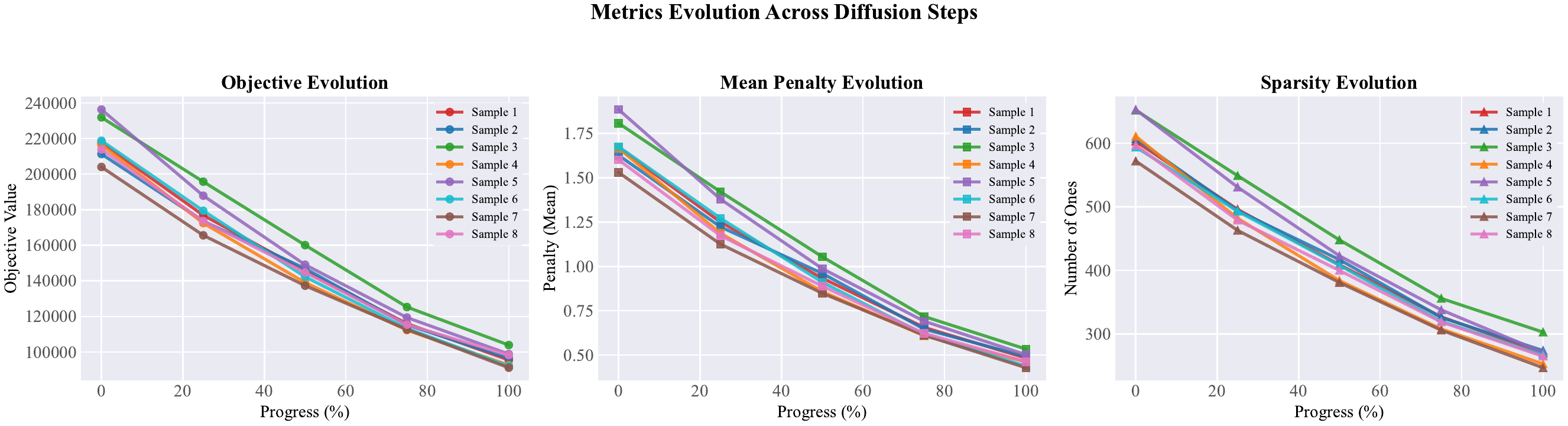}
\caption{ Obj/Penalty/Sparsity metric trajectories during the iterative generation process for a medium-scale Combinatorial Auction instance.}
\label{fig:penaltyca}
\end{figure*}

\begin{figure*}
\centering
\includegraphics[width=1\textwidth]{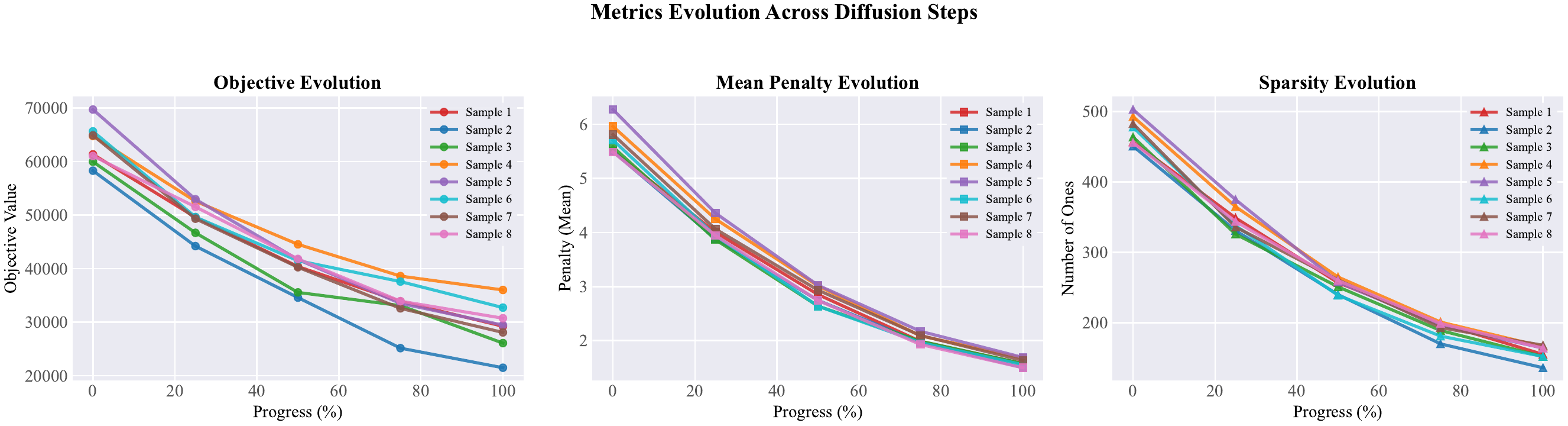}
\caption{ Obj/Penalty/Sparsity metric trajectories during the iterative generation process for a medium-scale Facility Location instance.}
\label{fig:penaltycfl}
\end{figure*}

\begin{figure*}
\centering
\includegraphics[width=1\textwidth]{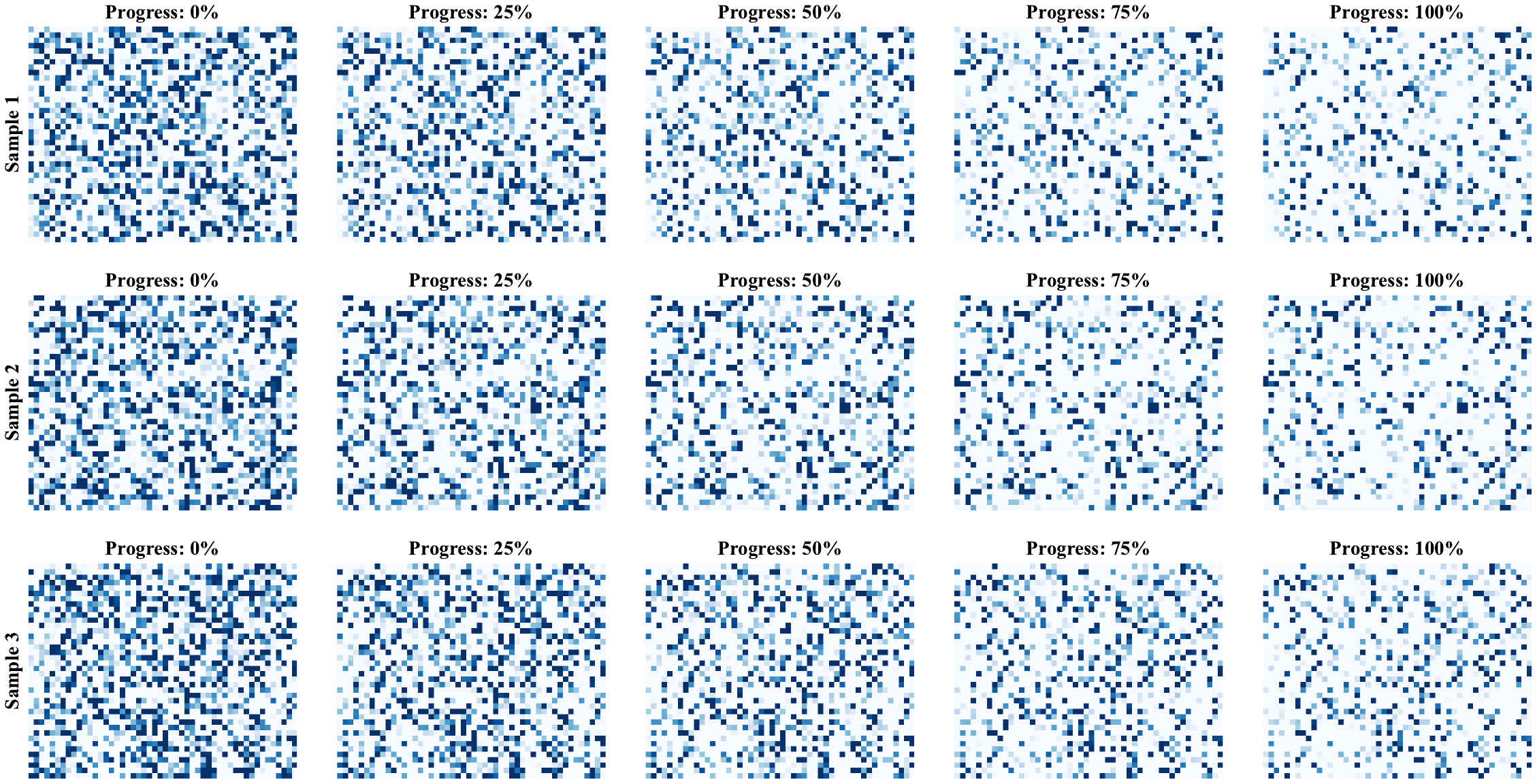}
\caption{Qualitative illustration of the generation process for medium-scale Combinatorial Auction.}
\label{fig:qica}
\end{figure*}

\begin{figure*}
\centering
\includegraphics[width=1\textwidth]{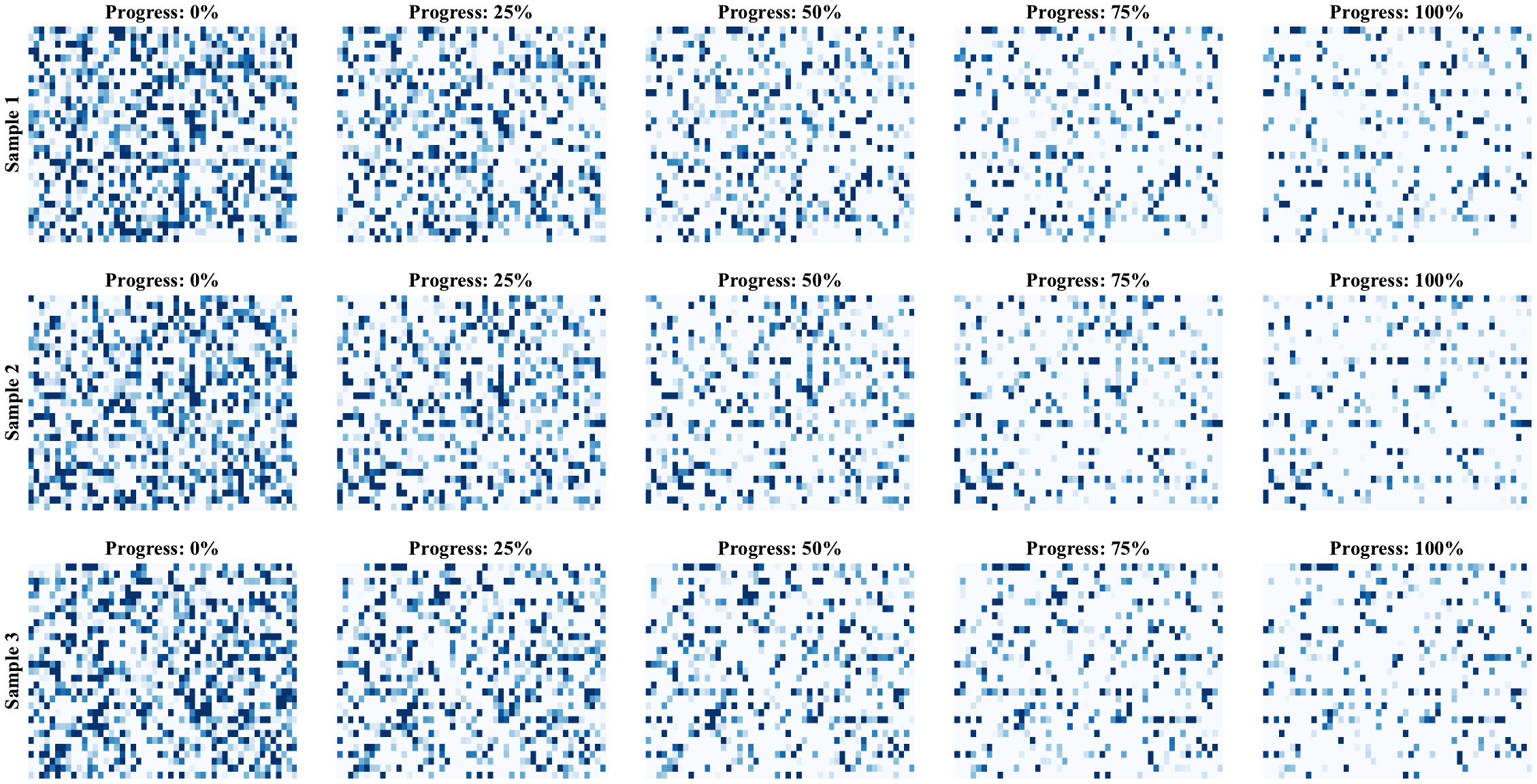}
\caption{Qualitative illustration of the generation process for medium-scale Facility Location.}
\label{fig:qicfl}
\end{figure*}

\subsection{\underline{D}etailed \underline{P}erformance \underline{I}mpact}
\label{app:ablation_add}
This section visualizes how the \underline{L}oss \underline{D}ynamics vary across different values of each parameter.

\textbf{\underline{\textbf{T}}raining Dynamics Impact} 
We conduct a sensitivity analysis on the key hyperparameters of our 
diffusion model, including the number of training timesteps $T$, the 
guidance scales $\gamma_o$ and $\gamma_c$, the number of Transformer 
layers $L$, the patch size $p$, and the loss-weighting coefficients 
$\rho_o$ and $\rho_c$. The results are reported in Fig.~\ref{fig:trainloss_ablationrho}. Both the convergence speed and the final converged value of the loss function are 
sensitive to these hyperparameters, indicating that 
they should be carefully tuned to ensure stable training.

\section{Broader Impacts.}
\label{app:impact}
This work aims to improve the efficiency of solving mixed-integer linear programming problems, which may benefit applications such as logistics, scheduling, resource allocation, and energy system optimization. By generating high-quality candidate solutions, the proposed method can potentially reduce computational cost and improve solver performance in large-scale optimization tasks. At the same time, MILP solvers are often used in high-stakes decision-making pipelines, where incorrect or biased problem formulations may lead to undesirable outcomes. Therefore, SRG should be used as a decision-support tool together with standard solver validation, domain-specific constraints, and appropriate human oversight, rather than as an unchecked automated decision-maker.


\begin{figure*}
\centering
\includegraphics[width=1\textwidth]{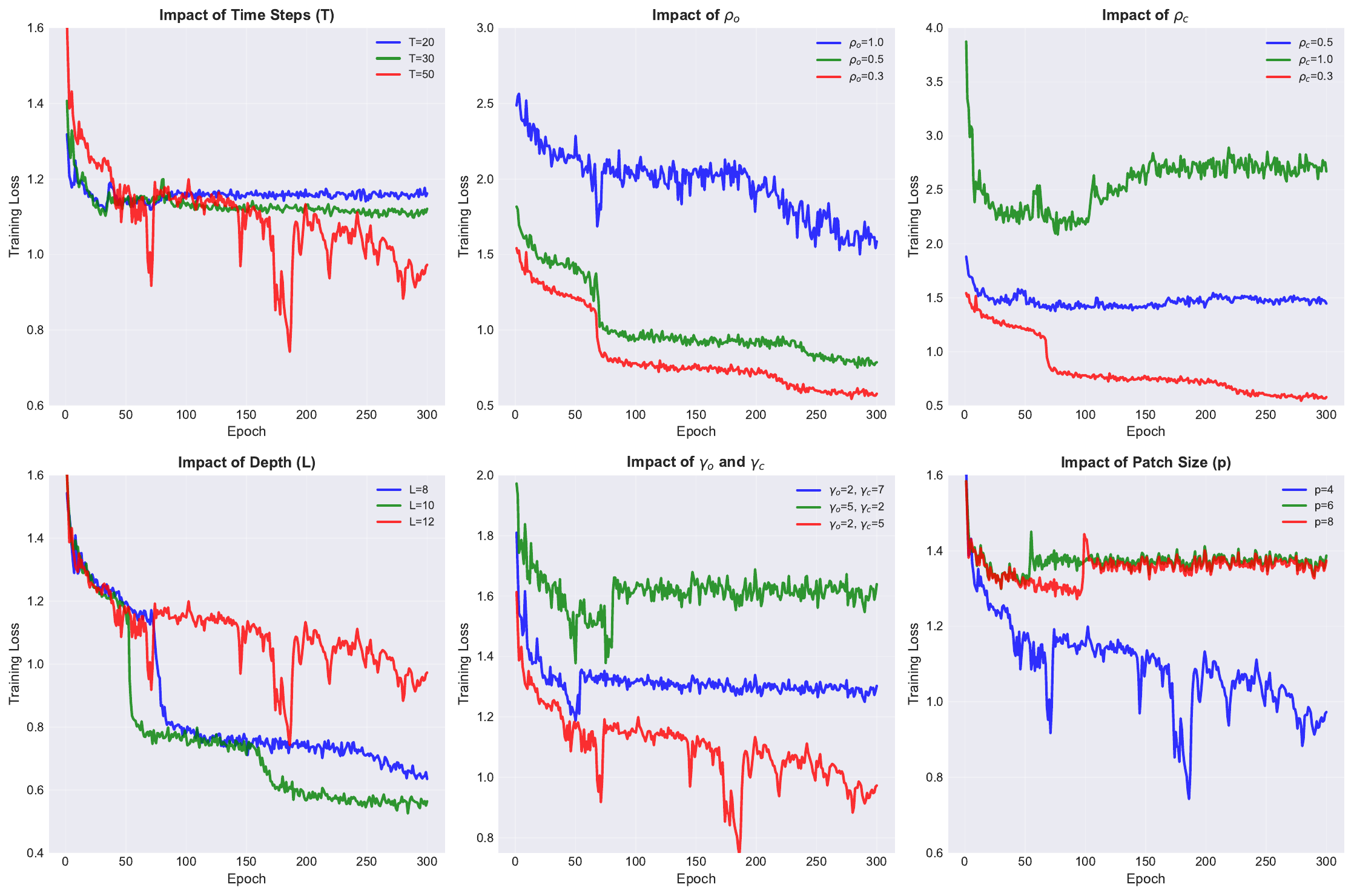}
\caption{Influence of different parameters on the target-score approximation loss on the medium-scale CA benchmark.}
\label{fig:trainloss_ablationrho}
\end{figure*}


\end{document}